\documentclass[10pt]{article} 
\usepackage[preprint]{tmlr}

\usepackage{amsmath,amsfonts,bm}

\def\eqref#1{equation~\ref{#1}}

\def\1{\bm{1}}

\DeclareMathAlphabet{\mathsfit}{\encodingdefault}{\sfdefault}{m}{sl}
\SetMathAlphabet{\mathsfit}{bold}{\encodingdefault}{\sfdefault}{bx}{n}

\usepackage[hypertexnames=false]{hyperref}
\usepackage{url}
\usepackage{booktabs}
\usepackage{amsmath,amssymb,amsthm}
\usepackage{enumitem}
\usepackage{graphicx}
\newcommand{\rev}[1]{#1}

\usepackage{algorithm}
\usepackage{algorithmic}

\newtheorem{theorem}{Theorem}
\newtheorem{lemma}{Lemma}
\newtheorem{corollary}{Corollary}
\newtheorem{proposition}{Proposition}
\newtheorem{definition}{Definition}
\newtheorem{assumption}{Assumption}
\newtheorem{remark}{Remark}
\newcommand{\Reg}{\mathrm{Reg}}
\newcommand{\calG}{\mathcal{G}}
\newcommand{\calP}{\mathcal{P}}
\newcommand{\calX}{\mathcal{X}}
\newcommand{\calB}{\mathcal{B}}
\newcommand{\calA}{\mathcal{A}}
\DeclareMathOperator{\dist}{dist}
\newcommand{\norm}[1]{\lVert #1 \rVert}


\def\month{MM}  
\def\year{YYYY} 
\def\openreview{\url{https://openreview.net/forum?id=XXXX}} 

\title{Data-Dependent Regret and Polyak Corrections \\for Constrained Online Convex Optimization}

\author{\name Wentao Zhang \email zhang-wt24@mails.tsinghua.edu.cn \\
      \addr Tsinghua Shenzhen International Graduate School \\
      \addr Tsinghua University}

\begin{document}

\maketitle

\begin{abstract}
In constrained online convex optimization, the learner must minimize regret against adversarially chosen convex costs while satisfying a convex constraint at every round, a requirement that arises naturally in safety-critical domains such as power systems, autonomous control, and clinical decision-making. A natural and computationally efficient approach augments online gradient descent with a Polyak feasibility step: a closed-form half-space projection requiring only one constraint evaluation and one subgradient per round. This approach is known to achieve $O(\sqrt{T})$ regret with per-round feasibility, yet we show that its existing analysis admits a tighter, data-dependent form, by retaining two quantities that the standard worst-case argument does not need to track. Specifically, replacing the worst-case gradient envelope $G_f^2 T$ with the observed accumulation $\calG_T = \sum_t \norm{\nabla f_t(x_t)}^2$ yields a data-dependent bound without any algorithmic modification. Furthermore, we identify the Polyak correction $\calP_T \geq 0$, which captures the cumulative squared displacement of the feasibility projection and enters the regret bound with a strictly negative sign; this term equals the slack in the strong (rather than weak) Pythagorean inequality and, while implicit in standard convex analysis, has not been instantiated as a measurable Polyak quantity in prior OCO regret bounds. The total improvement $\Delta_T = \tfrac{\eta}{2}(G_f^2 T - \calG_T) + \tfrac{1}{2\eta}\calP_T$ is provably non-negative and decomposes into two independent, complementary sources that vanish only in a degenerate corner case. Building on these analytical insights, we propose AdaOGD-PFS, an adaptive-step-size variant that achieves $O(\sqrt{\calG_T})$ regret, potentially much smaller than $O(G_f\sqrt{T})$, while preserving per-round constraint satisfaction. Experiments on ball-constrained and halfspace-constrained instances confirm bound improvements of 38--43\%, with both data-dependent gradients and Polyak corrections contributing meaningfully.
\end{abstract}


\section{Introduction}
\label{sec:intro}


Online convex optimization (OCO) provides a principled framework for sequential decision-making under uncertainty, with applications spanning portfolio selection \citep{cover1991universal}, network routing \citep{awerbuch2004adaptive}, and real-time resource allocation \citep{mahdavi2012trading}.
In many safety-critical deployments (power grid management, autonomous navigation, clinical dosing), decisions must satisfy hard constraints at every round, since even transient violations may incur catastrophic consequences.
This practical necessity has driven a surge of interest in {constrained} OCO, where the learner must simultaneously minimize cumulative regret $\Reg_T = \sum_{t=1}^T f_t(x_t) - \min_{x \in \calX} \sum_{t=1}^T f_t(x)$ against an adversarially chosen sequence of convex cost functions $f_1, \ldots, f_T$ while respecting a convex constraint $g(x) \leq 0$ with limited feedback about $g$.
Among the algorithms designed for this setting, the combination of Online Gradient Descent (OGD) with \emph{Polyak feasibility steps}, which project onto a first-order approximation of the feasible set using only the constraint value $g(x_t)$ and a single subgradient $s_t \in \partial g(x_t)$, has emerged as a particularly attractive approach due to its computational simplicity and strong feasibility guarantees \citep{mahdavi2012trading, polyak1969minimization}.


The study of constrained OCO has progressed through several stages of increasingly refined algorithms. The classical approach of \citet{zinkevich2003online} achieves $O(\sqrt{T})$ regret via projected OGD, but requires computing the full projection $\Pi_{\calX}(y_t)$ at each round, a step that is computationally prohibitive when the feasible set $\calX = \{x : g(x) \leq 0\}$ is defined by a general convex function. To circumvent this bottleneck, \citet{mahdavi2012trading} replaced exact projection with a single subgradient query per round, attaining $O(\sqrt{T})$ regret with $O(T^{3/4})$ cumulative constraint violation. Subsequent work reduced the violation to $O(\sqrt{T})$ \citep{jenatton2016adaptive, yu2020low} and further to $O(1)$ using budget-management techniques \citep{liakopoulos2019cautious, neely2017online}, but these methods generally cannot guarantee per-round feasibility $g(x_t) \leq 0$ for all $t$. More recently, OGD was augmented with a Polyak feasibility step \citep{hutchinson2025online}, a closed-form half-space projection that exploits the linearization of $g$ at the current iterate, establishing $O(\sqrt{T})$ regret with per-round feasibility when a strictly feasible starting point is known. Despite this algorithmic progress, we observe that the standard regret analysis of the Polyak feasibility step admits a tighter, data-dependent form that has not been instantiated in prior work. Specifically, the standard regret proof applies two coarse relaxations: it upper-bounds each per-round gradient norm $\norm{\nabla f_t(x_t)}^2$ by its worst-case value $G_f^2$, replacing the data-dependent sum $\calG_T := \sum_{t=1}^T \norm{\nabla f_t(x_t)}^2$ with the uniform bound $G_f^2 T$; and it invokes the standard non-expansiveness of projection, $\norm{x_{t+1} - x^\star}^2 \leq \norm{y_t - x^\star}^2$, which suffices for the worst-case $O(\sqrt{T})$ rate but does not retain the squared distance $\delta_t = \norm{y_t - \Pi_{H_t}(y_t)}^2$ by which the Polyak step moves the intermediate iterate back toward feasibility. The cumulative effect across $T$ rounds is a provably non-negative quantity $\calP_T := \sum_{t=1}^T \delta_t \geq 0$ that does not appear in existing regret bounds.


In this paper, we provide a \emph{refined regret analysis} of the same OGD + Polyak feasibility step algorithm, using the same assumptions and the same constraint feedback model.
Our observation is that the strong (rather than weak) form of the Pythagorean projection inequality \citep[Cor.~4.10]{bauschke2017convex} can be used in place of the standard non-expansiveness step to retain the Polyak correction $\delta_t$, and that the per-round gradient norms need not be relaxed to their supremum.
Concretely, we prove the following refined regret bound (Theorem~\ref{thm:main}):
\begin{equation}
\label{eq:intro-bound}
\Reg_T \;\leq\; \frac{2R^2}{\eta} + \frac{\eta}{2}\,\underbrace{\sum_{t=1}^T \norm{\nabla f_t(x_t)}^2}_{\calG_T\;\leq\; G_f^2 T} \;-\; \frac{1}{2\eta}\,\underbrace{\sum_{t=1}^T \delta_t}_{\calP_T\;\geq\; 0} \;+\; \frac{G_f \rho}{\sigma}\,T,
\end{equation}
which improves upon the prior bound $\frac{2R^2}{\eta} + \frac{\eta}{2} G_f^2 T + \frac{G_f \rho}{\sigma} T$ in two independent and complementary ways.
Table~\ref{tab:comparison} provides a systematic comparison between our analysis and prior work.

\begin{table}[t]
\centering
\caption{Comparison of regret bounds for constrained OCO with Polyak feasibility steps.
All three use the same assumptions (1--4) and the same constraint feedback ($g(x_t)$ and one subgradient per round).
Here $\calG_T = \sum_{t} \norm{\nabla f_t(x_t)}^2 \leq G_f^2 T$, $\calP_T = \sum_{t} \delta_t \geq 0$, and $\Delta_T = \tfrac{\eta}{2}(G_f^2 T - \calG_T) + \tfrac{1}{2\eta}\calP_T \geq 0$ (Corollary~\ref{cor:improvement}).}
\label{tab:comparison}
\resizebox{\textwidth}{!}{%
\begin{tabular}{l c c c}
\toprule
& \textbf{\citet{hutchinson2025online}} & \textbf{Theorem~\ref{thm:main}} & \textbf{Theorem~\ref{thm:adaptive}} \\
\midrule
Algorithm & OGD-PFS (fixed $\eta$) & \textbf{Same} & \textbf{AdaOGD-PFS} (adaptive $\eta_t$) \\
Assumptions & Assumptions 1--4 & \textbf{Same} & \textbf{Same} \\
Constraint feedback & $g(x_t),\, \partial g(x_t)$ & \textbf{Same} & \textbf{Same} \\
Knowledge of $G_f$ & Required & Required & \textbf{Not required} \\[4pt]
\midrule
Regret bound & $\dfrac{2R^2}{\eta} + \dfrac{\eta}{2}\, G_f^2 T + \dfrac{G_f \rho}{\sigma}\, T$ & $\dfrac{2R^2}{\eta} + \dfrac{\eta}{2}\,\calG_T - \dfrac{1}{2\eta}\,\calP_T + \dfrac{G_f \rho}{\sigma}\, T$ & $2R\sqrt{2(\calG_T + \epsilon_0)} \rev{+ \tfrac{cG_f^2}{2\sqrt{\epsilon_0}}} - \displaystyle\sum_t \dfrac{\delta_t}{2\eta_t} + \dfrac{G_f \rho}{\sigma}\, T$ \\[12pt]
Feasibility & $g(x_t) \leq 0,\ \forall t$ & \textbf{Same} & \textbf{Same} \\[3pt]
Worst-case & $O(G_f\sqrt{T})$ & $O(G_f\sqrt{T})$ & $O(G_f\sqrt{T})$ \\[3pt]
Instance-dep. & --- & $O(G_f\sqrt{T}) - \Delta_T$ & $O\!\bigl(\sqrt{\calG_T}\bigr)$\; (\textbf{best}) \\[3pt]
\bottomrule
\end{tabular}%
}
\end{table}

The two sources of improvement are qualitatively distinct. The gradient refinement ($\calG_T$ replacing $G_f^2 T$) captures the fact that the worst-case gradient norm $G_f$, defined as the supremum over the entire bounding ball $R\calB$, is typically achieved only by a small fraction of rounds. In our experiments, the ratio $\calG_T / (G_f^2 T)$ ranges from $0.27$ to $0.31$ across problem instances, yielding a 34--37\% tightening of the bound. The Polyak correction ($-\frac{1}{2\eta}\calP_T$) is the cumulative slack in the strong (rather than weak) projection inequality \citep[Cor.~4.10]{bauschke2017convex}; while this slack is implicit in classical convex analysis, to our knowledge it has not previously been instantiated as a measurable Polyak-projection quantity in OCO regret bounds. It is strictly positive whenever the gradient step pushes the iterate outside the linearized constraint, contributing an additional 1--8\% improvement. Crucially, our refined bound is never worse than the prior bound (since the improvement $\Delta_T \geq 0$ by construction), and all feasibility guarantees are preserved without modification.


Our main contributions are summarized as follows.
\begin{itemize}[leftmargin=*,itemsep=4pt]
\item \textbf{Tighter data-dependent regret bound (Theorem~\ref{thm:main}).}
We prove that the standard OGD with a Polyak feasibility step, without any algorithmic modification, admits the refined bound~\eqref{eq:intro-bound} by retaining two quantities that prior worst-case proofs do not track. Specifically, we replace $G_f^2 T$ with the observed gradient accumulation $\calG_T \leq G_f^2 T$ and identify the Polyak correction $\calP_T \geq 0$ that enters with a negative sign and is, to our knowledge, the first instantiation of this slack as a measurable Polyak quantity in OCO regret bounds. The resulting bound is uniformly no worse and strictly tighter in all non-degenerate cases.

\item \textbf{Adaptive algorithm with data-dependent rate (Theorem~\ref{thm:adaptive}).}
We propose AdaOGD-PFS, which replaces the fixed step size with an adaptive $\eta_t = c/\sqrt{\epsilon_0 + \sum_{i<t}\norm{\nabla f_i(x_i)}^2}$, achieving $O(\sqrt{\calG_T})$ regret (potentially much smaller than $O(G_f\sqrt{T})$) while preserving per-round feasibility and requiring no knowledge of~$G_f$.

\item \textbf{Empirical validation.}
Experiments on ball-constrained and halfspace-constrained instances confirm bound improvements of 38--43\% over the prior, with both gradient refinement and Polyak corrections contributing meaningfully.
\end{itemize}


\section{Related Work}
\label{sec:related}

We situate our contributions within three lines of research: constrained online convex optimization, projection-free and Polyak-type feasibility methods, and data-dependent regret analysis.


\paragraph{Constrained online convex optimization.}

The unconstrained OCO framework, formalized by \citet{zinkevich2003online} and surveyed comprehensively by \citet{hazan2016introduction} and \citet{shalev2012online}, achieves $O(\sqrt{T})$ regret via projected online gradient descent (OGD) when the feasible set admits an efficient projection oracle.
Extending this framework to functional constraints $g(x) \leq 0$, where the projection onto $\calX = \{x : g(x) \leq 0\}$ may be intractable, has been an active research direction over the past decade.

\citet{mahdavi2012trading} initiated this line by proposing an algorithm that queries only a single subgradient of $g$ per round, achieving $O(\sqrt{T})$ regret with $O(T^{3/4})$ cumulative constraint violation.
\citet{jenatton2016adaptive} improved the violation to $O(\sqrt{T})$ using adaptive constraint-weighting schemes.
A parallel line of work pursued \emph{long-term} constraint satisfaction, where the goal is to bound $\sum_{t=1}^T g(x_t)$ rather than enforce $g(x_t) \leq 0$ at each round.
\citet{neely2017online} and \citet{yu2020low} achieved $O(\sqrt{T})$ regret with $O(1)$ cumulative violation via drift-plus-penalty and virtual-queue techniques rooted in Lyapunov optimization.
\citet{liakopoulos2019cautious} proposed a cautious variant that maintains $O(\sqrt{T})$ regret while keeping the cumulative violation strictly bounded.
More recently, \citet{yi2023distributed} extended constrained OCO to distributed multi-agent settings, \citet{guo2022online} studied time-varying constraints where $g$ itself changes across rounds, and \citet{zhang2026multiconstraint} considered the multi-constraint setting with adversarial constraints.

A key limitation shared by these methods is the gap between \emph{cumulative} and \emph{per-round} feasibility: while cumulative violation $\sum_t [g(x_t)]_+$ can be made sublinear or even $O(1)$, ensuring $g(x_t) \leq 0$ at \emph{every} round $t$ requires fundamentally different algorithmic techniques.
The Polyak feasibility step was recently introduced into OGD \citep{hutchinson2025online}, achieving $O(\sqrt{T})$ regret with per-round feasibility under a known strictly feasible starting point.
Our work does not propose a new algorithm for constrained OCO; instead, we demonstrate that the existing regret analysis can be significantly tightened, by up to 43\% in our experiments, without modifying the algorithm or its assumptions.


\paragraph{Projection-free methods and Polyak-type feasibility steps.}

When the feasible set $\calX$ is complex, computing the Euclidean projection $\Pi_\calX(\cdot)$ can be as expensive as solving the original optimization problem.
This has motivated a rich body of work on \emph{projection-free} online optimization.
The Frank--Wolfe (conditional gradient) method \citep{frank1956algorithm, hazan2012projection} replaces projection with a linear minimization oracle over $\calX$, achieving $O(T^{3/4})$ regret in the general case.
\citet{garber2016linearly} improved this to $O(\sqrt{T})$ for polyhedral sets, and \citet{chen2018projection} extended the approach to bandit feedback.
While these methods avoid full projection, they still require a linear optimization oracle that may be non-trivial for general convex constraints.

A different approach, which is the focus of this paper, replaces the exact projection with a \emph{Polyak-type step} \citep{polyak1969minimization}.
Originally introduced for convex feasibility problems, the Polyak step projects onto the half-space $H_t = \{x : g(x_t) + s_t^\top(x - x_t) \leq 0\}$ defined by the first-order approximation of $g$ at the current point.
This projection has a closed-form solution requiring only $g(x_t)$ and one subgradient $s_t \in \partial g(x_t)$, making it substantially cheaper than full projection.
\citet{mahdavi2012trading} first used this idea in the OCO context, and subsequent work \citep{hutchinson2025online} introduced the constraint shrinkage parameter $\rho$ to achieve per-round feasibility via the Polyak step applied to the tightened half-space $H_t^\rho = \{x : g(x_t) + s_t^\top(x - x_t) + \rho \leq 0\}$. Our contribution is orthogonal to algorithmic design: we analyze the \emph{same} Polyak feasibility step but extract a tighter regret bound by retaining the squared displacement $\delta_t = \norm{y_t - \Pi_{H_t}(y_t)}^2$ that is not tracked by the worst-case argument of prior analyses.
This quantity $\delta_t$ is intrinsic to the Polyak step geometry and does not arise in standard projection-based or Frank--Wolfe-based analyses.


\paragraph{Data-dependent and adaptive regret bounds.}

The idea of replacing worst-case constants with data-dependent quantities in regret bounds has a long history in online learning.
The AdaGrad algorithm of \citet{duchi2011adaptive} achieves regret bounds that scale with $\sum_t \norm{\nabla f_t(x_t)}^2$ rather than $G_f^2 T$, adapting the step size to the observed gradient magnitudes.
\citet{mcmahan2017survey} unified several adaptive methods under the follow-the-regularized-leader framework, and \citet{orabona2018scale} developed scale-free algorithms whose bounds automatically adapt to the gradient scale without prior knowledge of $G_f$.
In a related vein, \citet{cutkosky2018black} established parameter-free regret bounds in Banach spaces, and \citet{steinhardt2014adaptivity} studied the gap between worst-case and adaptive regret in the experts setting.
Recent OCO work has also pursued complementary forms of adaptivity, including dynamic regret with untrusted decision predictions \citep{zhang2026dynamic} and noise-adaptive high-probability regret bounds \citep{zhang2026noiseadaptivehighprobabilityregretbounds}.

For unconstrained OCO, data-dependent bounds of the form $O(\sqrt{\sum_t \norm{\nabla f_t(x_t)}^2})$ are by now standard and can be achieved algorithmically via adaptive step sizes.
In the constrained setting, however, data-dependent bounds have received comparatively little attention.
The regret analyses in prior work \citep{mahdavi2012trading, yu2020low, hutchinson2025online} all employ the uniform bound $\norm{\nabla f_t(x_t)} \leq G_f$ in the final step of their proofs, collapsing the data-dependent sum $\calG_T = \sum_t \norm{\nabla f_t(x_t)}^2$ to $G_f^2 T$.

Our work bridges this gap by showing that the data-dependent quantity $\calG_T$ can be preserved in the analysis of constrained OGD with a Polyak step without any algorithmic modification. The standard fixed-step-size OGD already enjoys this tighter bound, indicating that the prior analysis can be sharpened. The Polyak correction $\calP_T$ that we identify is a data-dependent quantity specific to the constrained setting that equals the slack term in the strong projection inequality \citep[Cor.~4.10]{bauschke2017convex} and is the constrained-OCO analogue of the projection slack tracked, in different forms, by self-confident and AdaGrad-style analyses \citep{auer2002adaptive,duchi2011adaptive}; we are not aware of an OCO regret bound that has previously instantiated it as a measurable Polyak-projection quantity. For the fixed-step-size algorithm, our refined bound serves as a complement to AdaGrad-style adaptive methods by providing a tighter post-hoc characterization. We further bridge the gap algorithmically by introducing AdaOGD-PFS (Section~\ref{sec:ada-algorithm}), which transports the self-confident and AdaGrad step-size design \citep{auer2002adaptive,duchi2011adaptive} into the constrained Polyak-feasibility setting and combines it with the Polyak correction to obtain an $O(\sqrt{\calG_T})$ regret bound.

\rev{To position this contribution against the self-confident and AdaGrad-style line of work, we note that the $O(\sqrt{\calG_T})$ scaling we obtain for AdaOGD-PFS is, in the unconstrained setting, the classical self-confident rate of \citet{auer2002adaptive} and the AdaGrad rate of \citet{duchi2011adaptive}; the design of $\eta_t = c/\sqrt{S_t}$ is theirs and we make no claim of a new data-dependent tuning principle. Our contribution is to transport this design into the constrained OGD-with-Polyak-feasibility-step setting while preserving per-round constraint satisfaction \citep{hutchinson2025online}, which requires controlling the AdaGrad telescoping in the presence of the Polyak feasibility step (Theorem~\ref{thm:adaptive}) and showing that the feasibility recursion in Lemma~\ref{lem:feasibility-recursion} can be unrolled with the time-varying $\eta_t$ to retain the per-round guarantee. The novel element of the bound, beyond porting the AdaGrad rate, is the inclusion of the Polyak correction $-\sum_t \delta_t/(2\eta_t)$ with a time-varying coefficient that gives later corrections heavier weight than the fixed-step analysis.}


\section{Problem Setup}
\label{sec:setup}

We consider the constrained online convex optimization (OCO) problem studied in prior work \citep{mahdavi2012trading, hutchinson2025online}.
A learner interacts with an adversary over $T$ rounds.
At each round $t \in [T] := \{1, 2, \ldots, T\}$, the learner selects a decision $x_t \in \mathbb{R}^d$ and the adversary reveals a convex cost function $f_t : \mathbb{R}^d \to \mathbb{R}$.
The learner's decisions are subject to a fixed convex constraint $g(x) \leq 0$, about which only first-order information is available.
This section defines the notation, the protocol, the standing assumptions, and the performance metrics.
The algorithms under study (OGD-PFS and AdaOGD-PFS) are presented in Section~\ref{sec:algorithm}.


\subsection{Notation}
\label{sec:notation}

We write $\norm{\cdot}$ for the Euclidean ($\ell_2$) norm on $\mathbb{R}^d$.
For a scalar $a \in \mathbb{R}$, we denote $[a]_+ := \max(a, 0)$.
The closed Euclidean ball of radius $R$ centered at the origin is $R\calB := \{x \in \mathbb{R}^d : \norm{x} \leq R\}$.
For a closed convex set $S \subseteq \mathbb{R}^d$, we write $\Pi_S(v) := \arg\min_{u \in S} \norm{v - u}$ for the Euclidean projection of $v$ onto $S$, and $\dist(v, S) := \min_{u \in S} \norm{v - u}$ for the distance from $v$ to $S$.
Given a convex function $h : \mathbb{R}^d \to \mathbb{R}$, we write $\partial h(x)$ for its subdifferential at $x$.
Table~\ref{tab:notation} summarizes the principal symbols used throughout the paper.

\begin{table}[t]
\centering
\caption{Summary of notation.  Symbols above the mid-rule are problem primitives; those below are derived quantities introduced in this work.}
\label{tab:notation}
\small
\begin{tabular}{c l}
\toprule
\textbf{Symbol} & \textbf{Definition} \\
\midrule
$d$ & Dimension of the decision space \\
$T$ & Total number of rounds (time horizon) \\
$f_t : \mathbb{R}^d \to \mathbb{R}$ & Convex cost function revealed at round $t$ \\
$g : \mathbb{R}^d \to \mathbb{R}$ & Fixed convex constraint function \\
$\calX := \{x : g(x) \leq 0\}$ & Feasible set \\
$\calX_\rho := \{x : g(x) \leq -\rho\}$ & Shrunk feasible set ($\rho \geq 0$) \\
$x_t \in \mathbb{R}^d$ & Decision played at round $t$ \\
$y_t \in \mathbb{R}^d$ & Intermediate iterate after gradient step: $y_t = x_t - \eta \nabla f_t(x_t)$ \\
$g_t := g(x_t)$ & Constraint value at round $t$ \\
$s_t \in \partial g(x_t)$ & Constraint subgradient at round $t$ \\
$\eta > 0$ & Learning rate (step size) \\
$\rho \geq 0$ & Constraint shrinkage parameter \\
$R, G_f, G_g, \sigma, \epsilon$ & Problem constants (see Assumptions~\ref{asm:bounded}--\ref{asm:sublower}) \\
$\gamma := 1 - \sigma^2 / G_g^2$ & Contraction rate of the Polyak step \\
$\xi := 1 - \sqrt{\gamma}$ & Feasibility convergence rate \\
$H_t$ & Separating half-space: $\{x : g_t + s_t^\top(x - x_t) + \rho \leq 0\}$ \\
$x^\star$ & Offline optimum: $\arg\min_{x \in \calX} \sum_{t=1}^T f_t(x)$ \\
$\delta_t := \norm{y_t - \Pi_{H_t}(y_t)}^2$ & Polyak correction at round $t$  \\
$\calP_T := \sum_{t=1}^T \delta_t$ & Cumulative Polyak correction  \\
$\calG_T := \sum_{t=1}^T \norm{\nabla f_t(x_t)}^2$ & Data-dependent gradient accumulation  \\
\bottomrule
\end{tabular}
\end{table}


\subsection{Protocol}
\label{sec:protocol}

The interaction between the learner and the adversary proceeds as follows.
The learner selects an initial point $x_1 \in R\calB$.
Then, for each round $t = 1, 2, \ldots, T$, the learner commits to the decision $x_t$; the adversary reveals the convex cost function $f_t$ and the learner observes $f_t(x_t)$ and $\nabla f_t(x_t)$; the learner queries the constraint oracle, receiving $g_t = g(x_t)$ and one subgradient $s_t \in \partial g(x_t)$; and finally the learner updates $x_t \mapsto x_{t+1}$ using $\nabla f_t(x_t)$, $g_t$, and $s_t$.

Two aspects of this protocol merit emphasis.
First, the learner receives only \emph{first-order} constraint feedback, namely the function value $g_t$ and a single subgradient $s_t$, rather than access to a projection oracle for $\calX$.
Second, the cost functions $f_t$ are chosen by an \emph{oblivious} adversary: the entire sequence $(f_1, \ldots, f_T)$ is fixed before the interaction begins, though the learner does not know it in advance.


\subsection{Assumptions}
\label{sec:assumptions}

The following four assumptions are standard in constrained OCO \citep{mahdavi2012trading, hutchinson2025online}; we neither strengthen nor weaken them.

\begin{assumption}[Bounded action set]
\label{asm:bounded}
There exists $R > 0$ such that $\calX \subseteq R\calB$.
\end{assumption}

This is standard in OCO and ensures that the diameter of the feasible set is at most $2R$.  It is satisfied whenever $\calX$ is compact, which holds in virtually all applications of interest.

\begin{assumption}[Bounded cost gradients]
\label{asm:gradf}
There exists $G_f > 0$ such that $\norm{\nabla f_t(x)} \leq G_f$ for all $x \in R\calB$ and all $t \in [T]$.
\end{assumption}

This is the standard Lipschitz assumption on the cost functions.  It implies $|f_t(u) - f_t(v)| \leq G_f \norm{u - v}$ for all $u, v \in R\calB$.
The constant $G_f$ serves as a worst-case envelope; a central theme of our analysis is that the actual observed quantity $\calG_T = \sum_t \norm{\nabla f_t(x_t)}^2$ is typically much smaller than $G_f^2 T$.

\begin{assumption}[Bounded constraint subgradients]
\label{asm:gradg}
There exists $G_g > 0$ such that $\norm{s} \leq G_g$ for all $x \in R\calB$ and all $s \in \partial g(x)$.
\end{assumption}

Together with Assumption~\ref{asm:bounded}, this implies that $g$ is $G_g$-Lipschitz on $R\calB$.
The upper bound $G_g$ governs the contraction rate of the Polyak step via the ratio $\sigma / G_g$.

\begin{assumption}[Constraint boundary subgradient lower bound]
\label{asm:sublower}
There exist $\sigma > 0$ and $\epsilon > 0$ such that the level set $\calX' := \{x \in \mathbb{R}^d : g(x) = -\epsilon\}$ is nonempty, and $\norm{s} \geq \sigma$ for all $x \in \calX'$ and all $s \in \partial g(x)$.
\end{assumption}

This assumption ensures that the constraint function $g$ has a non-vanishing gradient near the boundary of $\calX$, which is necessary for the Polyak step to achieve geometric contraction toward feasibility. Assumption~\ref{asm:sublower} is automatically satisfied whenever Assumption~\ref{asm:bounded} holds and the Slater condition is met, i.e., there exists $y$ with $g(y) < 0$. The ratio $\gamma := 1 - \sigma^2 / G_g^2 \in [0, 1)$ measures the ``condition number'' of the constraint: when $\sigma \approx G_g$, the Polyak step contracts rapidly ($\gamma \approx 0$); when $\sigma \ll G_g$, contraction is slower ($\gamma \approx 1$). We also define $\xi := 1 - \sqrt{\gamma} > 0$, which governs the rate at which the feasibility distance decays (see Theorem~\ref{thm:main} and Lemma~\ref{lem:recursion-expansion}).

\begin{remark}
We emphasize that no assumption beyond Assumptions~\ref{asm:bounded}--\ref{asm:sublower} is introduced in this paper.
The comparability of our results with those of prior work rests entirely on this fact: any improvement in the regret bound is due to tighter analysis, not stronger assumptions.
\end{remark}


\subsection{Performance Metrics}
\label{sec:metrics}

We evaluate the learner's performance along two axes.

\paragraph{Regret.}
The cumulative regret measures the excess cost incurred by the learner relative to the best fixed decision in hindsight:
\begin{equation}
\label{eq:regret}
\Reg_T \;:=\; \sum_{t=1}^T f_t(x_t) \;-\; \min_{x \in \calX}\, \sum_{t=1}^T f_t(x).
\end{equation}
A sublinear regret guarantee $\Reg_T = O(\sqrt{T})$ implies that the learner's average per-round cost converges to that of the offline optimum as $T \to \infty$.

\paragraph{Feasibility.}
We consider two notions of constraint satisfaction.
{Per-round feasibility} requires $g(x_t) \leq 0$ for all $t \in [T]$ and is the strongest guarantee, needed in safety-critical applications.
{Cumulative feasibility} requires $\sum_{t=1}^T [g(x_t)]_+ \leq V(T)$ for some sublinear function $V$ and is a weaker but more commonly studied notion.
Our analysis preserves the per-round feasibility guarantee without modification. 


\subsection{Key Quantities Introduced in This Work}
\label{sec:new-quantities}

Our refined analysis is built around two quantities that are naturally present in the algorithm's iterates but have been overlooked in prior regret bounds.

\begin{definition}[Polyak correction]
\label{def:polyak}
For each round $t \in [T]$, the \emph{Polyak correction} is
\begin{equation}
\label{eq:delta}
\delta_t \;:=\; \norm{y_t - \Pi_{H_t}(y_t)}^2 \;=\; \frac{[g_t + s_t^\top(y_t - x_t) + \rho]_+^2}{\norm{s_t}^2}\,,
\end{equation}
which equals zero when $y_t \in H_t$ and is strictly positive otherwise.
The \emph{cumulative Polyak correction} is $\calP_T := \sum_{t=1}^T \delta_t \geq 0$.\footnote{\rev{We adopt the calligraphic symbol $\calP_T$ (rather than the plain $P_T$) deliberately, to avoid clash with the standard dynamic-regret path-length notation $P_T = \sum_{t=1}^{T-1}\norm{x_t^\star - x_{t+1}^\star}$ used in non-stationary OCO. The Polyak correction $\calP_T$ here is a per-instance algorithmic quantity, not a path-length of comparators.}}
\end{definition}

Geometrically, $\delta_t$ is the squared distance by which the Polyak step moves $y_t$ toward the half-space $H_t$. It is positive precisely in rounds where the gradient step pushes the iterate outside the linearized constraint, i.e., the rounds where the Polyak step does non-trivial ``corrective work.'' The standard non-expansiveness inequality $\norm{x_{t+1} - x^\star}^2 \leq \norm{y_t - x^\star}^2$ already suffices for the worst-case $O(\sqrt{T})$ rate, so prior analyses do not track $\delta_t$; the strong (rather than weak) Pythagorean inequality \citep[Cor.~4.10]{bauschke2017convex} in fact gives $\norm{x_{t+1} - x^\star}^2 \leq \norm{y_t - x^\star}^2 - \delta_t$, and summing over $t$ yields the correction $\calP_T$.

\begin{definition}[Data-dependent gradient accumulation]
\label{def:GT}
The \emph{data-dependent gradient accumulation} is
\begin{equation}
\label{eq:GT}
\calG_T \;:=\; \sum_{t=1}^T \norm{\nabla f_t(x_t)}^2\,.
\end{equation}
By Assumption~\ref{asm:gradf}, $\calG_T \leq G_f^2 T$, with equality only if $\norm{\nabla f_t(x_t)} = G_f$ at every round.
\end{definition}

These two quantities jointly determine the improvement of our refined bound over the prior bound.
Corollary~\ref{cor:improvement} (Section~\ref{sec:main}) shows that the improvement is $\Delta_T = \frac{\eta}{2}(G_f^2 T - \calG_T) + \frac{1}{2\eta}\calP_T \geq 0$, which is zero only in the degenerate case where all gradient norms attain their maximum and the Polyak step is never active.


\section{Algorithm}
\label{sec:algorithm}

We study the OGD with Polyak Feasibility Step (OGD-PFS) algorithm \citep{hutchinson2025online}, restated in Algorithm~\ref{alg:ogd-pfs}.
Our contribution is not a new algorithm but a refined analysis of this existing procedure; we include the full specification here for self-containedness.

\begin{algorithm}[t]
\caption{OGD with Polyak Feasibility Step (OGD-PFS) \citep{hutchinson2025online}}
\label{alg:ogd-pfs}
\begin{algorithmic}[1]
\REQUIRE Initial point $x_1 \in R\calB$, learning rate $\eta > 0$, shrinkage parameter $\rho \in [0, \epsilon]$
\FOR{$t = 1, 2, \ldots, T$}
    \STATE Play action $x_t$; receive convex cost function $f_t$
    \STATE Observe cost gradient $\nabla f_t(x_t)$
    \STATE Query constraint oracle: $g_t \leftarrow g(x_t)$, \quad $s_t \leftarrow$ any element of $\partial g(x_t)$
    \STATE \textbf{Gradient step:} $y_t \leftarrow x_t - \eta\, \nabla f_t(x_t)$
    \IF{$s_t = 0$}
        \STATE $x_{t+1} \leftarrow \Pi_{R\calB}(y_t)$
    \ELSE
        \STATE \textbf{Polyak feasibility step:}
        \STATE \quad $\displaystyle x_{t+1} \leftarrow \Pi_{R\calB}\!\left(y_t - \frac{[g_t + s_t^\top(y_t - x_t) + \rho]_+}{\norm{s_t}^2}\, s_t\right)$
    \ENDIF
\ENDFOR
\end{algorithmic}
\end{algorithm}

Each round consists of two updates.
The {gradient step} (Line~5) performs standard OGD: $y_t = x_t - \eta \nabla f_t(x_t)$, producing an intermediate iterate that may violate the constraint.
The {Polyak feasibility step} (Lines~6--10) projects $y_t$ onto the half-space
\begin{equation}
\label{eq:halfspace}
H_t := \bigl\{x \in \mathbb{R}^d : g_t + s_t^\top(x - x_t) + \rho \leq 0\bigr\},
\end{equation}
which is a first-order outer approximation of the shrunk feasible set $\calX_\rho$, and then clips the result to $R\calB$.
When $y_t$ already satisfies the linearized constraint ($y_t \in H_t$), the Polyak step reduces to the identity and $\delta_t = 0$.
When $y_t \notin H_t$, the step moves $y_t$ by a squared distance of exactly $\delta_t = \norm{y_t - \Pi_{H_t}(y_t)}^2 > 0$ back toward feasibility.
The per-round cost is $O(d)$ plus one gradient evaluation and one constraint oracle call.

The shrinkage parameter $\rho \geq 0$ governs the feasibility--regret trade-off.
Setting $\rho > 0$ enforces a tighter constraint $g(x) \leq -\rho$, providing a safety margin at the expense of an additional $G_f \rho T / \sigma$ term in the regret bound.
Concrete parameter choices are given in Corollaries~\ref{cor:known}--\ref{cor:noshrink}.

\subsection{Adaptive OGD with Polyak Feasibility Steps}
\label{sec:ada-algorithm}

Our refined analysis (Theorem~\ref{thm:main}) reveals that the actual gradient accumulation $\calG_T$ and the Polyak correction $\calP_T$ are data-dependent quantities that can be much more favorable than their worst-case surrogates.
A natural question is whether an algorithm can \emph{exploit} this data-dependence online, rather than merely observing it post-hoc.

We answer this affirmatively by introducing \textbf{AdaOGD-PFS} (Algorithm~\ref{alg:ada-ogd-pfs}), which replaces the fixed step size $\eta$ with an adaptive step size $\eta_t$ that shrinks as gradient information accumulates, in the spirit of AdaGrad \citep{duchi2011adaptive}, while retaining the Polyak feasibility step unchanged.

\begin{algorithm}[t]
\caption{Adaptive OGD with Polyak Feasibility Step (\textbf{AdaOGD-PFS})}
\label{alg:ada-ogd-pfs}
\begin{algorithmic}[1]
\REQUIRE Initial point $x_1 \in R\calB$, shrinkage $\rho \in [0, \epsilon]$, scale $c > 0$, regularizer $\epsilon_0 > 0$
\STATE Initialize $S_1 \leftarrow \epsilon_0$
\FOR{$t = 1, 2, \ldots, T$}
    \STATE Play action $x_t$; receive convex cost function $f_t$
    \STATE Observe cost gradient $\nabla f_t(x_t)$
    \STATE Query constraint oracle: $g_t \leftarrow g(x_t)$, \quad $s_t \leftarrow$ any element of $\partial g(x_t)$
    \STATE \textbf{Adaptive step size:} $\eta_t \leftarrow c\, /\, \sqrt{S_t}$
    \STATE \textbf{Gradient step:} $y_t \leftarrow x_t - \eta_t\, \nabla f_t(x_t)$
    \IF{$s_t = 0$}
        \STATE $x_{t+1} \leftarrow \Pi_{R\calB}(y_t)$
    \ELSE
        \STATE $\displaystyle x_{t+1} \leftarrow \Pi_{R\calB}\!\left(y_t - \frac{[g_t + s_t^\top(y_t - x_t) + \rho]_+}{\norm{s_t}^2}\, s_t\right)$
    \ENDIF
    \STATE $S_{t+1} \leftarrow S_t + \norm{\nabla f_t(x_t)}^2$
\ENDFOR
\end{algorithmic}
\end{algorithm}

The key difference from Algorithm~\ref{alg:ogd-pfs} is Line~6: the step size $\eta_t = c / \sqrt{S_t}$ decreases as the cumulative squared gradient norm $S_t = \epsilon_0 + \sum_{i=1}^{t-1}\norm{\nabla f_i(x_i)}^2$ grows.
When the cost gradients are small on average ($\calG_T \ll G_f^2 T$), the step sizes remain larger for longer, allowing the algorithm to make more aggressive updates.
When gradients are large, the step sizes shrink rapidly, maintaining stability.
The Polyak feasibility step (Lines~8--11) is identical to Algorithm~\ref{alg:ogd-pfs}.

The hyperparameter $c > 0$ controls the step size scale; the choice $c = R\sqrt{2}$ minimizes the leading constant in the regret bound (Theorem~\ref{thm:adaptive}).
The regularizer $\epsilon_0 > 0$ ensures that $\eta_1 = c/\sqrt{\epsilon_0}$ is finite and, when chosen appropriately, guarantees per-round feasibility (Corollary~\ref{cor:ada-known}).


\section{Main Results}
\label{sec:main}

This section presents our theoretical contributions.
Section~\ref{sec:fixed-results} gives the refined analysis of the fixed-step-size algorithm (Theorem~\ref{thm:main}), which tightens the prior bound without modifying the algorithm.
Section~\ref{sec:adaptive-results} gives the analysis of AdaOGD-PFS (Theorem~\ref{thm:adaptive}), which achieves a fully data-dependent regret bound algorithmically.
All proofs are deferred to Appendix~\ref{app:proofs}; a proof overview is given in Section~\ref{sec:proofs}.

\subsection{Refined Regret Bound and Feasibility Guarantee}
\label{sec:fixed-results}

\begin{theorem}[Refined regret bound and feasibility guarantee]
\label{thm:main}
Let Assumptions~\ref{asm:bounded}--\ref{asm:sublower} hold.
Run Algorithm~\ref{alg:ogd-pfs} with $x_1 \in R\calB$, $\eta > 0$, and $\rho \in [0, \epsilon]$.

{(I) Regret bound:}
\begin{equation}
\label{eq:thm-regret}
\Reg_T \;\leq\; \frac{2R^2}{\eta} + \frac{\eta}{2}\,\calG_T - \frac{1}{2\eta}\,\calP_T + \frac{G_f\rho}{\sigma}\,T\,.
\end{equation}

{(II) Feasibility bound:} For all $t \geq 1$,
\begin{equation}
\label{eq:thm-feas}
g(x_t) \;\leq\; G_g\, \gamma^{(t-1)/2}\, \dist(x_1, \calX_\rho) + \frac{\eta G_g G_f}{\xi} - \rho\,.
\end{equation}
\end{theorem}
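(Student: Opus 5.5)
The plan is to run the standard one-step OGD potential argument against a comparator lying in the shrunk set $\calX_\rho$, and then to pay for the shrinkage separately. In that argument I will use the strong Pythagorean inequality for the half-space $H_t$ in place of plain non-expansiveness, and I will stop before bounding $\norm{\nabla f_t(x_t)}^2$ by $G_f^2$.

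\textbf{Choice of comparator.} Let $x^\star$ be the offline optimum and $u := \Pi_{\calX_\rho}(x^\star)$. I first check that $u \in H_t \cap R\calB$ for every $t$.
\begin{itemize}[leftmargin=*,itemsep=2pt]
\item Membership in $R\calB$ follows from $\calX_\rho \subseteq \calX \subseteq R\calB$ (Assumption~\ref{asm:bounded}).
\item For membership in $H_t$, the subgradient inequality gives $g_t + s_t^\top(u - x_t) \le g(u) \le -\rho$. So $\calX_\rho \subseteq H_t$.
\item The degenerate case $s_t = 0$ is handled directly. Then $x_t$ minimizes $g$. Assumption~\ref{asm:sublower} gives $\min g \le -\epsilon \le -\rho$, so $H_t = \mathbb{R}^d$, the algorithm's update coincides with projecting onto $H_t$, and $\delta_t = 0$.
\end{itemize}

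\textbf{One-step inequality.} Projection onto the convex set $R\calB$ does not increase the distance to $u \in R\calB$. The strong Pythagorean inequality for projection onto the closed convex set $H_t \ni u$ then gives
\[
\norm{x_{t+1}-u}^2 \le \norm{\Pi_{H_t}(y_t) - u}^2 \le \norm{y_t - u}^2 - \delta_t .
\]
Next I expand $\norm{y_t-u}^2 = \norm{x_t-u}^2 - 2\eta \nabla f_t(x_t)^\top(x_t-u) + \eta^2\norm{\nabla f_t(x_t)}^2$. Rearranging, summing over $t$, telescoping, and using $\norm{x_1-u}^2 \le (2R)^2$, I get
\[
\sum_t \nabla f_t(x_t)^\top (x_t-u) \le \frac{2R^2}{\eta} + \frac{\eta}{2}\calG_T - \frac{1}{2\eta}\calP_T .
\]
Convexity of each $f_t$ turns the left side into an upper bound on $\sum_t \bigl(f_t(x_t) - f_t(u)\bigr)$.

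\textbf{Cost of shrinkage.} It remains to show $\sum_t \bigl(f_t(u) - f_t(x^\star)\bigr) \le G_f \rho T/\sigma$. By the Lipschitz bound on $R\calB$ it suffices to prove the error bound $\dist(x^\star, \calX_\rho) \le \rho/\sigma$ for $x^\star \in \calX$ and $\rho \le \epsilon$. I expect this to be the main obstacle.
\begin{itemize}[leftmargin=*,itemsep=2pt]
\item If $g(x^\star) \le -\rho$ the distance is zero, so assume $g(x^\star) > -\rho$.
\item Take $p$ with $g(p) = -\epsilon$, which exists by Assumption~\ref{asm:sublower}. Consider the segment from $x^\star$ to $p$ and the scalar function $\phi(\lambda) = g\bigl(x^\star + \lambda(p - x^\star)\bigr)$, which is convex.
\item My first attempt is to combine the lower bound $\norm{s} \ge \sigma$ on the level set $\{g=-\epsilon\}$ with convexity of $g$ along rays. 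The aim is that $g$ decreases at rate at least $\sigma$ per unit distance in the direction of steepest descent, for all levels between $-\epsilon$ and $0$.
\item The delicate point is that $\sigma$ is only assumed on the $-\epsilon$ level set. If that direct route does not close, I will use the fact that an identical constant appears in the prior bound of \citet{hutchinson2025online}, and invoke that paper's error-bound argument verbatim, since the assumptions are the same.
\end{itemize}
Adding the two sums gives (I).

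\textbf{Feasibility (II).} I will use the feasibility recursion of Lemma~\ref{lem:feasibility-recursion} in the form
\[
\dist(x_{t+1}, \calX_\rho) \le \sqrt{\gamma}\,\dist(x_t,\calX_\rho) + \eta G_f .
\]
Heuristically this holds because the gradient step moves $x_t$ by at most $\eta G_f$, and the Polyak step toward $H_t \supseteq \calX_\rho$ contracts the squared distance by the factor $1-\sigma^2/G_g^2$. Unrolling the recursion and summing the geometric series $\sum_k \gamma^{k/2} \le 1/\xi$ gives
\[
\dist(x_t,\calX_\rho) \le \gamma^{(t-1)/2}\dist(x_1,\calX_\rho) + \frac{\eta G_f}{\xi}.
\]
Finally, $G_g$-Lipschitzness of $g$ on $R\calB$ gives $g(x_t) \le g\bigl(\Pi_{\calX_\rho}(x_t)\bigr) + G_g \dist(x_t,\calX_\rho) \le -\rho + G_g\dist(x_t,\calX_\rho)$, which is (II).
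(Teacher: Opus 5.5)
Your regret argument is the paper's. The comparator $u=\Pi_{\calX_\rho}(x^\star)$ just fuses the paper's Term~I/Term~II split, since the paper compares $x^\star_\rho$ to exactly this point. Part (II) also follows the same recursion-plus-Lipschitz route.

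The gap is the error bound $\dist(x,\calX_\rho)\le[g(x)+\rho]_+/\sigma$ (the paper's Lemma~\ref{lem:error-bound}), which you leave unproved even though both parts rest on it. Part (I) needs it at $x=x^\star$. The factor $\gamma=1-\sigma^2/G_g^2$ in the recursion you cite for (II) also comes from it, via Lemma~\ref{lem:contraction}. Your chord argument does not produce it as set up. From $\phi(0)\le 0$ and $\phi(1)=-\epsilon$, convexity puts the point at $\lambda=\rho/\epsilon$ in $\calX_\rho$, which gives only $\dist(x^\star,\calX_\rho)\le(\rho/\epsilon)\norm{x^\star-p}\le 2R\rho/\epsilon$. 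An arbitrary chord never sees $\sigma$, so you would end with $2RG_f\rho T/\epsilon$ instead of $G_f\rho T/\sigma$. Deferring to the prior paper leaves the one nonstandard step of the theorem unproved.

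The missing idea is the step where $\sigma$, assumed only on $\{g=-\epsilon\}$, actually enters: a normal-cone representation.
First, Assumption~\ref{asm:sublower} forces $\min g<-\epsilon$, since a minimizer on $\{g=-\epsilon\}$ would have $0\in\partial g$; hence Slater holds for $\calX_\epsilon:=\{g\le-\epsilon\}$.
Next, for $g(x)>-\rho$ take $w=\Pi_{\calX_\epsilon}(x)$. Then $g(w)=-\epsilon$ and $x-w=\mu\hat s$ with $\mu>0$, $\hat s\in\partial g(w)$ and $\norm{\hat s}\ge\sigma$. Convexity gives $g(x)+\epsilon\ge\hat s^\top(x-w)=\norm{\hat s}\,\norm{x-w}$, so $\norm{x-w}\le(g(x)+\epsilon)/\sigma$.
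Now your chord closes with $p:=w$. Here $g\le-\rho$ at $\lambda=(g(x)+\rho)/(g(x)+\epsilon)$, so $\dist(x,\calX_\rho)\le\lambda\norm{x-w}\le(g(x)+\rho)/\sigma$.
The paper argues differently: it represents $x-v$, with $v=\Pi_{\calX_\rho}(x)$, by a subgradient $\tilde s_v$ at level $-\rho$. It then transfers $\norm{\tilde s_v}\ge\sigma$ from $w=\Pi_{\calX_\epsilon}(v)$ via monotonicity of $\partial g$. Your chord reduction to level $-\epsilon$ is a valid shortcut around that transfer.

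Finally, if you prove the recursion yourself rather than cite it, reverse the order in your heuristic. Contraction cannot be applied at $y_t$, because $g_t+s_t^\top(y_t-x_t)$ only lower-bounds $g(y_t)$, so the error bound says nothing about the step length there. Instead, apply Lemma~\ref{lem:contraction} at $x_t$ to the virtual iterate $z_{t+1}=\Pi_{R\calB}(\Pi_{H_t}(x_t))$. Then use $\norm{x_{t+1}-z_{t+1}}\le\norm{y_t-x_t}=\eta\norm{\nabla f_t(x_t)}$, by non-expansiveness of $\Pi_{R\calB}\circ\Pi_{H_t}$.
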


The proof is given in Appendix~\ref{app:proof-thm-main}.
Compared to the prior bound $\frac{2R^2}{\eta} + \frac{\eta}{2}G_f^2 T + \frac{G_f\rho}{\sigma}T$, the regret bound~\eqref{eq:thm-regret} is tighter in two ways: $\calG_T \leq G_f^2 T$ (data-dependent gradient term) and $-\frac{1}{2\eta}\calP_T \leq 0$ (Polyak correction, absent from the prior bound).
The feasibility bound~\eqref{eq:thm-feas} is identical to that of prior work.

\subsection{Corollaries}

All corollary proofs are given in Appendix~\ref{app:proof-corollaries}.

\begin{corollary}[Per-round feasibility with known strictly feasible point]
\label{cor:known}
Suppose $g(x_1) \leq -\alpha$ for some $\alpha \in (0, \epsilon]$.
Set $\rho = \alpha/\sqrt{T}$ and $\eta = \xi\alpha/(G_f G_g \sqrt{T})$.
Then $g(x_t) \leq 0$ for all $t \in [T]$, and
\begin{equation}
\label{eq:cor-known}
\Reg_T \leq \left(\frac{2G_g R^2}{\xi\alpha} + \frac{\xi\alpha}{2G_g}\cdot\frac{\calG_T}{G_f^2 T} + \frac{\alpha}{\sigma}\right)G_f\sqrt{T} - \frac{G_f G_g\sqrt{T}}{2\xi\alpha}\,\calP_T\,.
\end{equation}
\end{corollary}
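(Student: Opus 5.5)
The corollary follows by substituting the stated parameters into Theorem~\ref{thm:main}: part (II) gives the feasibility claim and part (I) gives the regret bound. No new analytical ingredient is needed; the work is a careful choice of which terms to bound crudely.

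\textbf{Feasibility.} Since $g(x_1)\le -\alpha$ and $\rho=\alpha/\sqrt{T}\le\alpha$, we have $g(x_1)\le-\rho$, so $x_1\in\calX_\rho$ and $\dist(x_1,\calX_\rho)=0$. The first term of \eqref{eq:thm-feas} therefore vanishes for every $t\ge1$. With $\eta=\xi\alpha/(G_fG_g\sqrt{T})$ the second term is
\[
\frac{\eta G_gG_f}{\xi}=\frac{\alpha}{\sqrt{T}}=\rho .
\]
Hence $g(x_t)\le \rho-\rho=0$ for all $t\in[T]$. I also check that $\rho\in[0,\epsilon]$, as Theorem~\ref{thm:main} requires; this holds because $\rho\le\alpha\le\epsilon$.

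\textbf{Regret.} I substitute into \eqref{eq:thm-regret} term by term:
\[
\frac{2R^2}{\eta}=\frac{2R^2G_fG_g\sqrt{T}}{\xi\alpha}=\frac{2G_gR^2}{\xi\alpha}\,G_f\sqrt{T},
\qquad
\frac{\eta}{2}\calG_T=\frac{\xi\alpha\,\calG_T}{2G_fG_g\sqrt{T}}=\frac{\xi\alpha}{2G_g}\cdot\frac{\calG_T}{G_f^2T}\,G_f\sqrt{T},
\]
\[
\frac{G_f\rho T}{\sigma}=\frac{\alpha}{\sigma}\,G_f\sqrt{T},
\qquad
\frac{1}{2\eta}\calP_T=\frac{G_fG_g\sqrt{T}}{2\xi\alpha}\,\calP_T .
\]
Collecting the first three into the bracket multiplying $G_f\sqrt{T}$ and keeping the last with its negative sign yields exactly \eqref{eq:cor-known}.

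\textbf{Main obstacle.} The only delicate point is the feasibility step. One must confirm that $\dist(x_1,\calX_\rho)=0$, using $\rho\le\alpha$, and that the choice of $\eta$ makes the $\eta G_gG_f/\xi$ term equal $\rho$ exactly, rather than merely bounding it. The regret part is pure algebra.
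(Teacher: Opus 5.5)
Your proposal is correct and follows the paper's proof: $\dist(x_1,\calX_\rho)=0$ because $g(x_1)\le-\alpha\le-\rho$, the choice of $\eta$ gives $\eta G_gG_f/\xi=\rho$ in part (II) of Theorem~\ref{thm:main}, and the regret bound is a term-by-term substitution into part (I). Your explicit check that $\rho\in[0,\epsilon]$, as Theorem~\ref{thm:main} requires, is a small point the paper leaves implicit.
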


\rev{This corollary treats the practical case where the learner knows a strictly feasible starting point with margin $\alpha$. The parameters $(\rho,\eta)$ are picked so that the shrinkage is just large enough to absorb the residual feasibility error in~\eqref{eq:thm-feas}, yielding $g(x_t)\leq 0$ at every round. The three positive terms inside the bracket are non-monotone in $\alpha$: a larger margin shrinks the first term $2G_gR^2/(\xi\alpha)$ but inflates the second ($\propto\alpha\calG_T/(G_f^2T)$) and third ($\alpha/\sigma$) terms, while a smaller margin does the opposite; the bracket as a function of $\alpha$ is minimised at an interior $\alpha^\star$ that balances these effects. The Polyak term enters with a strictly negative sign whose coefficient $G_fG_g\sqrt{T}/(2\xi\alpha)$ grows when $\alpha$ is small, so tight margins amplify both the cost from the first term and the benefit from the Polyak correction.}

\begin{corollary}[Delayed feasibility with unknown interior point]
\label{cor:delayed}
Set $\rho = \epsilon/\sqrt{T}$ and $\eta = \xi\epsilon/(2G_f G_g \sqrt{T})$.
Then $g(x_t) \leq 0$ for all $t \geq 1 + \frac{2G_g^2}{\sigma^2}\log\!\bigl(\frac{4G_g R\sqrt{T}}{\epsilon}\bigr)$, and
\begin{equation}
\Reg_T \leq \left(\frac{4G_f G_g R^2}{\xi\epsilon} + \frac{\xi\epsilon G_f}{4G_g} + \frac{G_f\epsilon}{\sigma}\right)\sqrt{T} - \frac{G_f G_g\sqrt{T}}{\xi\epsilon}\,\calP_T\,.
\end{equation}
Moreover, $\sum_{t=1}^T g(x_t) \leq 0$ whenever $\sqrt{T} \geq 4RG_g / (\epsilon\xi)$.
\end{corollary}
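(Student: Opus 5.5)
The plan is to specialize Theorem~\ref{thm:main} to $\rho = \epsilon/\sqrt{T}$ and $\eta = \xi\epsilon/(2G_fG_g\sqrt{T})$, and read off each claim from parts (I) and (II). Since $\epsilon/\sqrt{T}\le \epsilon$, we have $\rho\in[0,\epsilon]$, so the theorem applies.

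\textbf{Regret.} I substitute into~\eqref{eq:thm-regret} term by term:
\begin{itemize}[leftmargin=*,itemsep=2pt]
\item $2R^2/\eta = 4G_fG_gR^2\sqrt{T}/(\xi\epsilon)$;
\item $\frac{\eta}{2}\calG_T \le \frac{\eta}{2}G_f^2T = \xi\epsilon G_f\sqrt{T}/(4G_g)$, using only $\calG_T\le G_f^2T$ (Definition~\ref{def:GT});
\item $G_f\rho T/\sigma = G_f\epsilon\sqrt{T}/\sigma$;
\item $-\frac{1}{2\eta}\calP_T = -\frac{G_fG_g\sqrt{T}}{\xi\epsilon}\calP_T$, which is kept exactly.
\end{itemize}
Together these give the stated bound.

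\textbf{Delayed per-round feasibility.} With these parameters the residual term in~\eqref{eq:thm-feas} is $\eta G_gG_f/\xi = \epsilon/(2\sqrt{T})$. Hence
\[
g(x_t)\le G_g\gamma^{(t-1)/2}\dist(x_1,\calX_\rho) - \frac{\epsilon}{2\sqrt{T}}.
\]
Next I bound the distance. By Assumption~\ref{asm:sublower}, the level set $\{g=-\epsilon\}$ is nonempty. Since $\rho\le\epsilon$, this gives $\calX_\rho\neq\emptyset$. Also $\calX_\rho\subseteq\calX\subseteq R\calB$ by Assumption~\ref{asm:bounded}, and $x_1\in R\calB$, so $\dist(x_1,\calX_\rho)\le 2R$.

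It therefore suffices that $2RG_g\gamma^{(t-1)/2}\le \epsilon/(2\sqrt{T})$. Using $1-x\le e^{-x}$ gives $\gamma^{(t-1)/2}\le \exp\bigl(-(t-1)\sigma^2/(2G_g^2)\bigr)$. This is at most $\epsilon/(4G_gR\sqrt{T})$ exactly when
\[
t-1\ge \frac{2G_g^2}{\sigma^2}\log\!\Bigl(\frac{4G_gR\sqrt{T}}{\epsilon}\Bigr).
\]
If the logarithm is negative the condition holds for all $t$. The case $\gamma=0$ is trivial.

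\textbf{Cumulative feasibility.} I sum the same per-round bound over $t$ and use the geometric series $\sum_{t\ge1}\gamma^{(t-1)/2}\le 1/(1-\sqrt{\gamma}) = 1/\xi$. This gives
\[
\sum_{t=1}^T g(x_t)\le \frac{2RG_g}{\xi} - \frac{\epsilon\sqrt{T}}{2},
\]
which is nonpositive precisely when $\sqrt{T}\ge 4RG_g/(\epsilon\xi)$.

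\textbf{Main obstacle.} The argument is essentially bookkeeping. The only steps needing care are justifying $\dist(x_1,\calX_\rho)\le 2R$, which requires $\calX_\rho$ to be nonempty, and converting the $\gamma$-decay into the stated logarithmic threshold via $\log(1/\gamma)\ge 1-\gamma = \sigma^2/G_g^2$.
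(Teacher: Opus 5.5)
Your proposal is correct and follows the paper's route: substitute the parameters into Theorem~\ref{thm:main}, use $\calG_T\le G_f^2T$ in part (I), and read the feasibility claims off part (II). The paper only remarks that the feasibility analysis is identical to prior work because~\eqref{eq:thm-feas} is unchanged. Your explicit steps supply exactly those omitted details: the residual $-\epsilon/(2\sqrt T)$, the bound $\dist(x_1,\calX_\rho)\le 2R$ via nonemptiness of $\calX_\rho$, the estimate $\gamma\le e^{-\sigma^2/G_g^2}$, and the geometric sum $1/\xi$. They correctly reproduce both the burn-in threshold and the cumulative-feasibility condition.
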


\rev{The setting here covers the case where no strictly feasible point is known a priori, only the Slater margin $\epsilon$. Per-round feasibility is then guaranteed only after a logarithmic burn-in of order $(G_g/\sigma)^2\log T$, during which the iterate is pulled toward $\calX_\rho$ by repeated Polyak steps; this transient is the price paid for not knowing a feasible warm start. The cumulative-feasibility statement is a useful by-product: as soon as $\sqrt{T}$ exceeds a problem-dependent constant, the running sum of constraint values is non-positive, so any sublinear cumulative-violation metric is automatically tight.}

\begin{corollary}[No shrinkage]
\label{cor:noshrink}
Set $\rho = 0$ and $\eta = 2R/(G_f\sqrt{T})$.  Then
\begin{equation}
\label{eq:cor-noshrink}
\Reg_T \leq RG_f\sqrt{T} + \frac{R\,\calG_T}{G_f\sqrt{T}} - \frac{G_f\sqrt{T}}{4R}\,\calP_T\,,
\end{equation}
and $g(x_t) \leq 2RG_g e^{-\sigma^2(t-1)/(2G_g^2)} + 2RG_g / (\xi\sqrt{T})$ for all $t$.
\end{corollary}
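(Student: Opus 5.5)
This is a direct specialization of Theorem~\ref{thm:main} with $\rho=0$ and $\eta=2R/(G_f\sqrt{T})$; we do both parts by substitution.

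\textbf{Regret.} Plug the parameters into \eqref{eq:thm-regret}. Since $\rho=0$, the shrinkage term $G_f\rho T/\sigma$ vanishes. The remaining three terms become
\[
\frac{2R^2}{\eta}=RG_f\sqrt{T},\qquad \frac{\eta}{2}\calG_T=\frac{R\,\calG_T}{G_f\sqrt{T}},\qquad \frac{1}{2\eta}\calP_T=\frac{G_f\sqrt{T}}{4R}\calP_T,
\]
which gives \eqref{eq:cor-noshrink} at once. As a sanity check, bounding $\calG_T\le G_f^2T$ and dropping $\calP_T$ recovers the $2RG_f\sqrt{T}$ worst-case rate.

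\textbf{Feasibility.} Plug into \eqref{eq:thm-feas} with $\rho=0$, so $\calX_\rho=\calX$. This gives
\[
g(x_t)\le G_g\gamma^{(t-1)/2}\dist(x_1,\calX)+\frac{2RG_g}{\xi\sqrt{T}}.
\]
The second term already matches the claim, since $\eta G_gG_f/\xi = 2RG_g/(\xi\sqrt{T})$. For the first term, use two facts:
\begin{itemize}
\item $\dist(x_1,\calX)\le 2R$, because $x_1\in R\calB$, $\calX\subseteq R\calB$ by Assumption~\ref{asm:bounded}, and $\calX$ is nonempty by Assumption~\ref{asm:sublower}.
\item $\gamma^{(t-1)/2}=\exp\bigl(\tfrac{t-1}{2}\log(1-\sigma^2/G_g^2)\bigr)\le e^{-\sigma^2(t-1)/(2G_g^2)}$, using $\log(1-u)\le -u$ for $u=\sigma^2/G_g^2\in(0,1]$.
\end{itemize}
When $\gamma=0$ the exponential bound still holds trivially for $t\ge 2$. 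At $t=1$ both sides reduce to $G_g\dist(x_1,\calX)\le 2RG_g$.

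\textbf{Main obstacle.} Nothing here is hard. The only points to check are the inequality $\log(1-u)\le -u$, the nonemptiness of $\calX$ that makes the distance well defined, and the edge case $\gamma=0$.
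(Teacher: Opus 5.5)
Your proposal is correct and matches the paper's proof: both parts are direct substitutions of $\rho=0$ and $\eta=2R/(G_f\sqrt{T})$ into Theorem~\ref{thm:main}. The paper only says the feasibility bound ``follows from \eqref{eq:thm-feas} with $\rho=0$,'' so your explicit steps fill in what it leaves implicit: $\dist(x_1,\calX)\le 2R$, and $\gamma^{(t-1)/2}\le e^{-\sigma^2(t-1)/(2G_g^2)}$, which uses $\sigma\le G_g$, itself guaranteed because $\calX'\subseteq\calX\subseteq R\calB$.
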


\rev{This is the cleanest instantiation for the experiments: with $\rho=0$ and the standard step $\eta=2R/(G_f\sqrt T)$ the first two terms reduce to $RG_f\sqrt T+R\calG_T/(G_f\sqrt T)$, which is the data-dependent counterpart of the prior bound $2RG_f\sqrt T$, and they coincide only when $\calG_T=G_f^2T$. The third term $-G_f\sqrt T\,\calP_T/(4R)$ contributes a strictly negative correction whenever the Polyak step is active. The feasibility bound exhibits the typical two-phase behaviour: an exponentially decaying transient driven by the contraction factor $\sigma^2/G_g^2$, followed by a residual $O(1/\sqrt T)$ steady state.}

\subsection{Improvement Decomposition}

\begin{corollary}[Improvement decomposition]
\label{cor:improvement}
The improvement of~\eqref{eq:thm-regret} over the prior bound is
\begin{equation}
\label{eq:delta-T}
\Delta_T := \underbrace{\frac{\eta}{2}(G_f^2 T - \calG_T)}_{\text{gradient refinement}\;\geq\;0} + \underbrace{\frac{1}{2\eta}\calP_T}_{\text{Polyak correction}\;\geq\;0} \geq 0\,.
\end{equation}
$\Delta_T = 0$ iff $\norm{\nabla f_t(x_t)} = G_f$ for all $t$ and $\calP_T = 0$.
\end{corollary}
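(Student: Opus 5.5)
The plan is to subtract the refined bound~\eqref{eq:thm-regret} of Theorem~\ref{thm:main} from the prior bound $\frac{2R^2}{\eta} + \frac{\eta}{2}G_f^2 T + \frac{G_f\rho}{\sigma}T$ term by term. Both bounds contain the identical terms $\frac{2R^2}{\eta}$ and $\frac{G_f\rho}{\sigma}T$, so these cancel. The gradient terms then give $\frac{\eta}{2}(G_f^2T - \calG_T)$. The Polyak term $-\frac{1}{2\eta}\calP_T$ appears only in the refined bound and, after subtraction, contributes $+\frac{1}{2\eta}\calP_T$. This yields exactly the expression~\eqref{eq:delta-T} for $\Delta_T$.

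For non-negativity, I would treat the two summands separately. First, Assumption~\ref{asm:gradf} applies because every iterate $x_t$ lies in $R\calB$: $x_1\in R\calB$ by the protocol, and each later $x_{t+1}$ is produced by $\Pi_{R\calB}$. Hence $\norm{\nabla f_t(x_t)}^2 \le G_f^2$ for each $t$, which gives $G_f^2T-\calG_T = \sum_t (G_f^2 - \norm{\nabla f_t(x_t)}^2) \ge 0$. Second, each $\delta_t$ is a squared norm by Definition~\ref{def:polyak}, so $\calP_T\ge 0$. Since $\eta>0$, both coefficients are positive and $\Delta_T\ge0$.

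For the characterization of equality, note that $\Delta_T$ is a sum of two non-negative terms with strictly positive coefficients. So $\Delta_T=0$ holds iff both terms vanish, that is, iff $G_f^2T=\calG_T$ and $\calP_T=0$.
\begin{itemize}
\item The condition $G_f^2T=\calG_T$ says that a sum of non-negative terms $G_f^2-\norm{\nabla f_t(x_t)}^2$ equals zero. This forces every term to vanish, i.e., $\norm{\nabla f_t(x_t)}=G_f$ for all $t$.
\item The converse direction is immediate by substitution.
\end{itemize}

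The only step needing care is the justification that $x_t\in R\calB$, so that the per-round gradient bound applies. I would handle it exactly as above, by reading off the final $\Pi_{R\calB}$ in Algorithm~\ref{alg:ogd-pfs}. The rest is routine algebra.
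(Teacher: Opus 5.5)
Your proposal is correct and follows the paper's proof: subtract~\eqref{eq:thm-regret} from the prior bound term by term, then use $\calG_T \le G_f^2 T$ and $\calP_T \ge 0$. You go slightly further than the paper in two places, both of which it leaves implicit: you prove the ``iff'' characterization (positive coefficients, plus each non-negative summand $G_f^2 - \norm{\nabla f_t(x_t)}^2$ vanishing), and you justify $x_t \in R\calB$ from the final $\Pi_{R\calB}$ step so that Assumption~\ref{asm:gradf} applies.
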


\rev{The two components of $\Delta_T$ are functionally orthogonal: the gradient-refinement term $\tfrac{\eta}{2}(G_f^2T-\calG_T)$ is an \emph{environment} property and is large precisely when the cost gradients are heterogeneous so that $\calG_T \ll G_f^2T$; the Polyak-correction term $\tfrac{1}{2\eta}\calP_T$ is an \emph{algorithm-plus-environment} property and is large precisely when the gradient step frequently pushes the iterate outside the feasible half-space. The two components also depend oppositely on $\eta$: a larger step makes the gradient-refinement coefficient ($\eta/2$) larger but the Polyak-correction coefficient ($1/(2\eta)$) smaller, while $\calP_T$ itself tends to grow with $\eta$ since larger steps create larger linearised violations. The corner case $\Delta_T=0$ requires both sources to vanish simultaneously, i.e., the cost is uniformly worst-case (so the first term is zero) \emph{and} the gradient never pushes the iterate outside the half-space (so $\calP_T=0$); this is the only configuration in which our bound exactly matches the prior bound.}

\subsection{Properties of the Polyak Correction}

The proofs of Propositions~\ref{prop:lower} and~\ref{prop:extreme} are given in Appendix~\ref{app:proof-corollaries}.

\begin{proposition}[Lower bound on Polyak correction]
\label{prop:lower}
Let $\calA := \{t \in [T] : g_t + s_t^\top(y_t - x_t) + \rho > 0\}$ be the set of active rounds.  Then
\begin{equation}
\calP_T = \sum_{t \in \calA} \frac{(g_t + s_t^\top(y_t - x_t) + \rho)^2}{\norm{s_t}^2} \geq \frac{1}{G_g^2}\sum_{t \in \calA}(g_t + s_t^\top(y_t - x_t) + \rho)^2.
\end{equation}
\end{proposition}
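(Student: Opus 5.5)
The plan is to start from Definition~\ref{def:polyak} and split the sum $\calP_T = \sum_{t=1}^T \delta_t$ according to whether $t \in \calA$. Write $a_t := g_t + s_t^\top(y_t - x_t) + \rho$ for the linearized violation at $y_t$.

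First, the rounds with $s_t = 0$. If $s_t = 0$, then $H_t$ is either all of $\mathbb{R}^d$ or empty. In Algorithm~\ref{alg:ogd-pfs} such rounds skip the Polyak step, so we adopt the convention $\delta_t = 0$ there. For rounds with $s_t \neq 0$, the closed form in \eqref{eq:delta} follows from the standard half-space projection formula,
\begin{equation*}
\Pi_{H_t}(y_t) = y_t - \frac{[a_t]_+}{\norm{s_t}^2}\, s_t .
\end{equation*}
This gives $\delta_t = [a_t]_+^2/\norm{s_t}^2$.

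Second, the inactive rounds. For $t \notin \calA$ we have $a_t \le 0$, hence $[a_t]_+ = 0$ and $\delta_t = 0$. These rounds drop out of the sum.

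Third, the active rounds. For $t \in \calA$ we have $a_t > 0$, so $[a_t]_+^2 = a_t^2$. This yields the claimed equality
\begin{equation*}
\calP_T = \sum_{t \in \calA} \frac{a_t^2}{\norm{s_t}^2}.
\end{equation*}
One point needs checking: if $a_t > 0$ then necessarily $s_t \neq 0$, or else this round falls under the $s_t=0$ convention above. Under Assumption~\ref{asm:sublower}, active rounds have $x_t$ near or outside the boundary and $s_t \neq 0$ there. In any case, the formula \eqref{eq:delta} is only meaningful when $s_t \neq 0$, so I would state explicitly that $\calA$ is understood over rounds with $s_t \neq 0$.

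Finally, the inequality. We need $x_t \in R\calB$ in order to invoke Assumption~\ref{asm:gradg}. This holds for $t=1$ by the choice of $x_1$, and for $t \ge 2$ because every update ends with $\Pi_{R\calB}$. Assumption~\ref{asm:gradg} then gives $0 < \norm{s_t} \le G_g$, so $1/\norm{s_t}^2 \ge 1/G_g^2$. Multiplying by $a_t^2 \ge 0$ and summing over $\calA$ gives the bound.

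No step is a real obstacle. The only care needed is the bookkeeping for $s_t = 0$ and the observation that iterates remain in $R\calB$ so that the subgradient bound applies.
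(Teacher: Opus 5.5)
Your proof is correct and follows the same route as the paper: split $\calP_T$ into inactive rounds, where $[a_t]_+=0$, and active rounds, then use $\norm{s_t}\le G_g$ from Assumption~\ref{asm:gradg}. The paper leaves implicit your point that $x_t\in R\calB$ because of the final $\Pi_{R\calB}$. Your $s_t=0$ hedge is unnecessary: as in the proof of Lemma~\ref{lem:enhanced-projection}, $0\in\partial g(x_t)$ makes $x_t$ a global minimizer of $g$, so $g_t\le-\epsilon\le-\rho$, and such rounds are automatically inactive with $H_t=\mathbb{R}^d$.
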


\rev{This lower bound shows that the Polyak correction is summed only over the active set $\calA$ and that each summand is at least the squared linearised violation divided by the worst-case subgradient norm $G_g^2$. The active fraction $|\calA|/T$ is therefore the natural complexity measure of how often the Polyak step does corrective work; we report it in every experimental row of Tables~\ref{tab:exp-main} and~\ref{tab:exp-adversarial} and discuss its interaction with $\calP_T$ in Section~\ref{sec:active-fraction}.}

\begin{proposition}[Extreme cases]
\label{prop:extreme}
{(a)} If $g_t + s_t^\top(y_t - x_t) + \rho \leq 0$ for all $t$, then $\calP_T = 0$ and $\Reg_T \leq \frac{2R^2}{\eta} + \frac{\eta}{2}\calG_T + \frac{G_f\rho}{\sigma}T$.
{(b)} If $g_t + s_t^\top(y_t - x_t) + \rho = c > 0$ for all $t$, then $\calP_T \geq c^2 T / G_g^2$ and the Polyak correction yields an $O(T)$-magnitude reduction $-c^2 T/(2\eta G_g^2)$.
\end{proposition}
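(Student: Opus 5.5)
Both parts follow directly from Definition~\ref{def:polyak}, Proposition~\ref{prop:lower}, and Theorem~\ref{thm:main}(I); no new analysis is needed.

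\emph{Part (a).} Write $v_t := g_t + s_t^\top(y_t - x_t) + \rho$. The hypothesis $v_t \le 0$ for every $t$ means $[v_t]_+ = 0$, so by \eqref{eq:delta} $\delta_t = 0$ for each $t$. (When $s_t = 0$ the algorithm takes no Polyak step; we use the convention $\delta_t = 0$ there, matching Definition~\ref{def:polyak}.) Hence $\calP_T = 0$. Substituting this into \eqref{eq:thm-regret} removes the negative term and gives exactly the stated bound $\frac{2R^2}{\eta} + \frac{\eta}{2}\calG_T + \frac{G_f\rho}{\sigma}T$.

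\emph{Part (b).} If $v_t = c > 0$ for all $t$, then every round is active, so $\calA = [T]$ in Proposition~\ref{prop:lower}. We also need $s_t \neq 0$. If $s_t = 0$, then $v_t = g_t + \rho$, and $s_t = 0$ would make $x_t$ a minimizer of $g$, so $g_t \le -\epsilon$ by Assumption~\ref{asm:sublower}. Then $v_t \le \rho - \epsilon \le 0$, contradicting $v_t = c > 0$. So the Polyak formula applies in every round, and Assumption~\ref{asm:gradg} gives $\norm{s_t} \le G_g$. (This uses $x_t \in R\calB$, which holds because $x_t$ is the output of $\Pi_{R\calB}$.) Therefore
\[
\delta_t = \frac{c^2}{\norm{s_t}^2} \ge \frac{c^2}{G_g^2}, \qquad \text{so} \qquad \calP_T \ge \frac{c^2 T}{G_g^2}.
\]
Multiplying by $-\frac{1}{2\eta}$ shows that the Polyak term in \eqref{eq:thm-regret} is at most $-c^2T/(2\eta G_g^2)$. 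This reduction is linear in $T$ for fixed $\eta$.

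The only delicate point is the degenerate case $s_t = 0$ in part (b), which the minimizer argument above disposes of. Everything else is direct substitution.
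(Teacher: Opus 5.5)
Your proposal is correct and matches the paper's proof. Part (a) is $\delta_t = 0$ for every $t$ substituted into Theorem~\ref{thm:main}(I), and part (b) is Proposition~\ref{prop:lower} applied with $\calA = [T]$. Your extra check that $s_t \neq 0$ in every round of (b) is sound: it uses the same global-minimizer argument as the $s_t = 0$ case of Lemma~\ref{lem:enhanced-projection}, and it is a detail the paper leaves implicit. One small slip: $x_1 \in R\calB$ holds by initialization rather than as an output of $\Pi_{R\calB}$, but either way $\norm{s_t} \le G_g$ applies for all $t$.
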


\rev{The two extreme cases above delineate the qualitative regimes the Polyak term can land in. Case~(a) is the strictly-interior regime, where the gradient step never crosses the linearised constraint and the bound reduces to the data-dependent counterpart of the standard projected-OGD bound; this is the only setting where the Polyak correction can vanish despite the algorithm being nominally Polyak-augmented. Case~(b) is the maximally-active regime, where the Polyak step performs a positive amount of work at every round; the reduction is then linear in $T$ and can be the dominant component of $\Delta_T$. The adversarial experiment in Section~\ref{sec:exp-adversarial} is exactly an instance close to case~(b) and explains why the Polyak correction there reaches $49.9\%$ of the prior bound.}

\label{par:practical-role}
Beyond the post-hoc numerical tightening reported in the experiments, the Polyak correction $\calP_T$ admits three concrete uses that clarify its practical and theoretical role. Since both $\calG_T$ and $\calP_T$ are computed online at zero extra oracle cost ($\delta_t$ is a by-product of the Polyak step), the quantity $\widehat{\Delta}_t = \tfrac{\eta}{2}(G_f^2 t - \calG_t) + \tfrac{1}{2\eta}\calP_t$ is an anytime per-trajectory certificate: at any round $t$ the deployment knows that its regret is upper-bounded by the prior bound minus $\widehat{\Delta}_t$, which can be reported to the user as a real-time tightness gap, making the bound auditable in safety-critical deployments. The same quantity also yields a qualitative change of guarantee on highly-active instances: on problem classes for which one can verify a priori that the linearised constraint will be active in a constant fraction of rounds with violation bounded below by a constant $c>0$ (as in case~(b) of Proposition~\ref{prop:extreme}), the Polyak correction satisfies $\calP_T \geq c^2 T/G_g^2 = \Omega(T)$, and substituting into Corollary~\ref{cor:noshrink} with $\eta=2R/(G_f\sqrt T)$ makes the negative term $-\frac{G_f\sqrt T}{4R}\calP_T = -\Omega(T^{3/2})$ exceed the $O(\sqrt T)$ positive terms in absolute value, so the prior $\Theta(\sqrt T)$-style guarantee is no longer tight on this subclass; the adversarial experiment in Section~\ref{sec:exp-adversarial} ($\Reg_T = 74$ at $T=10{,}000$, empirically constant-scale and far below the $\Theta(\sqrt T)$ envelope) is exactly this regime, and whether $\Reg_T = O(1)$ holds rigorously on this subclass is left as a corollary-style consequence. Finally, the same online-computable ratio $\calP_t/(G_f^2 t)$ doubles as a tuning diagnostic: it tells the user whether the current step size is in the over-cautious regime ($\calP_t\approx 0$; the gradient step never crosses the linearised constraint, so $\eta$ is too small) or the over-aggressive regime ($\calP_t/(G_f^2 t)$ saturating; the algorithm frequently overshoots), so that a simple doubling-trick wrapper on $\eta$ monitoring this ratio can certify a near-optimal step-size choice on the executed trajectory without re-running the algorithm. We do not formalise such a wrapper in this paper but flag it as a concrete tuning use of $\calP_T$.

\subsection{Data-Dependent Bound for AdaOGD-PFS}
\label{sec:adaptive-results}

\begin{theorem}[Data-dependent regret bound for AdaOGD-PFS]
\label{thm:adaptive}
Let Assumptions~\ref{asm:bounded}--\ref{asm:sublower} hold.
Run Algorithm~\ref{alg:ada-ogd-pfs} with $x_1 \in R\calB$, $\rho \in [0, \epsilon]$, $c > 0$, and $\epsilon_0 > 0$.

{(I) Regret bound:}
\begin{equation}
\label{eq:ada-regret}
\Reg_T \;\leq\; \left(\frac{2R^2}{c} + c\right)\sqrt{\calG_T + \epsilon_0} \;+\; \rev{\frac{c\, G_f^2}{2\sqrt{\epsilon_0}}} \;-\; \sum_{t=1}^T \frac{\delta_t}{2\eta_t} \;+\; \frac{G_f\rho}{\sigma}\,T\,.
\end{equation}
With the optimal scale $c = R\sqrt{2}$, the first term becomes $2R\sqrt{2(\calG_T + \epsilon_0)}$. \rev{The additional term $cG_f^2/(2\sqrt{\epsilon_0})$ comes from the past-gradient denominator $\eta_t = c/\sqrt{S_t}$ used in Algorithm~\ref{alg:ada-ogd-pfs}, where $S_t$ excludes the current squared gradient $\norm{\nabla f_t(x_t)}^2$; an alternative algorithm that uses the current-gradient denominator $\eta_t = c/\sqrt{S_t+\norm{\nabla f_t(x_t)}^2}$ removes this term while leaving the leading $O(\sqrt{\calG_T})$ rate unchanged (cf.~Remark~\ref{rem:past-vs-current}).}

{(II) Feasibility bound:} For all $t \geq 1$,
\begin{equation}
\label{eq:ada-feas}
g(x_t) \;\leq\; G_g\,\gamma^{(t-1)/2}\,\dist(x_1, \calX_\rho) + \frac{c\, G_g G_f}{\xi\sqrt{\epsilon_0}} - \rho\,.
\end{equation}
\end{theorem}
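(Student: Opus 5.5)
The proof follows the fixed-step analysis behind Theorem~\ref{thm:main}, run with time-varying $\eta_t$. We telescope with AdaGrad-style bookkeeping and keep the Polyak slack $\delta_t$ with its round-dependent weight $1/(2\eta_t)$.

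\emph{Step 1: comparator shift.} As in the proof of Theorem~\ref{thm:main}, replace $x^\star$ by a point $u \in \calX_\rho$ with $\norm{x^\star - u} \leq \rho/\sigma$. Assumption~\ref{asm:sublower} provides such a point, and it is the source of the $\frac{G_f\rho}{\sigma}T$ term. By the Lipschitz property, $\sum_t f_t(u) - \sum_t f_t(x^\star) \leq G_f\rho T/\sigma$, so it suffices to bound $\sum_t (f_t(x_t) - f_t(u))$.

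\emph{Step 2: per-round inequality.} By convexity of $g$, $u \in \calX_\rho \subseteq H_t$. The strong Pythagorean inequality for projection onto the half-space $H_t$ (Bauschke--Combettes, Cor.~4.10), followed by non-expansiveness of $\Pi_{R\calB}$ (note $u \in R\calB$), gives
\[
\norm{x_{t+1}-u}^2 \leq \norm{y_t-u}^2 - \delta_t .
\]
The case $s_t=0$ is trivial: then $x_t$ minimizes $g$, $H_t$ is either empty or everything, and $\delta_t = 0$ is consistent. Expanding $y_t = x_t - \eta_t \nabla f_t(x_t)$ and using convexity of $f_t$ yields
\[
f_t(x_t) - f_t(u) \leq \frac{\norm{x_t-u}^2 - \norm{x_{t+1}-u}^2}{2\eta_t} + \frac{\eta_t}{2}\norm{\nabla f_t(x_t)}^2 - \frac{\delta_t}{2\eta_t}.
\]

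\emph{Step 3: summing.} For the distance terms, use that $\eta_t$ is nonincreasing (since $S_t$ is nondecreasing) and that all iterates and $u$ lie in $R\calB$, so $\norm{x_t-u}\leq 2R$. Abel summation then gives $\sum_t \frac{\norm{x_t-u}^2-\norm{x_{t+1}-u}^2}{2\eta_t} \leq \frac{4R^2}{2\eta_T} = \frac{2R^2\sqrt{S_T}}{c} \leq \frac{2R^2}{c}\sqrt{\calG_T+\epsilon_0}$.

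For the gradient terms, write $a_t = \norm{\nabla f_t(x_t)}^2$, so $S_{t+1}=S_t+a_t$, and split
\[
\frac{a_t}{\sqrt{S_t}} = \frac{a_t}{\sqrt{S_{t+1}}} + a_t\Bigl(\frac{1}{\sqrt{S_t}}-\frac{1}{\sqrt{S_{t+1}}}\Bigr).
\]
The first piece sums to at most $2\sqrt{S_{T+1}} = 2\sqrt{\calG_T+\epsilon_0}$ by the standard inequality $a/\sqrt{b+a}\leq 2(\sqrt{b+a}-\sqrt b)$. For the second piece, bound $a_t \leq G_f^2$ and telescope, giving at most $G_f^2/\sqrt{\epsilon_0}$. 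Multiplying by $c/2$ yields $c\sqrt{\calG_T+\epsilon_0} + \frac{cG_f^2}{2\sqrt{\epsilon_0}}$. The Polyak terms are kept verbatim as $-\sum_t \delta_t/(2\eta_t)$.

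Combining Steps 1--3 gives (I). Minimizing $2R^2/c + c$ over $c$ gives $c=R\sqrt2$, and the first term becomes $2R\sqrt{2(\calG_T+\epsilon_0)}$.

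\emph{Step 4: feasibility (II).} Lemma~\ref{lem:feasibility-recursion}, as used for Theorem~\ref{thm:main}(II), bounds $\dist(x_{t+1},\calX_\rho)$ by $\sqrt{\gamma}\,\dist(x_t,\calX_\rho)$ plus the gradient displacement $\norm{y_t-x_t}$. Here that displacement is $\eta_t\norm{\nabla f_t(x_t)}$, which is at most $\frac{c}{\sqrt{\epsilon_0}}G_f$ uniformly in $t$ because $S_t\geq\epsilon_0$. Unrolling as in Lemma~\ref{lem:recursion-expansion}, with this uniform bound in place of $\eta G_f$, gives
\[
\dist(x_t,\calX_\rho)\leq \gamma^{(t-1)/2}\dist(x_1,\calX_\rho) + \frac{cG_f}{\xi\sqrt{\epsilon_0}}.
\]
Finally, $G_g$-Lipschitzness of $g$ together with $g\leq-\rho$ on $\calX_\rho$ converts this distance bound into (II).

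\emph{Main obstacle.} The delicate point is Step 3. The telescoping must cope with the $\eta_t$ lag: $\eta_t$ uses $S_t$, which excludes the current gradient, and this is exactly what produces the extra $cG_f^2/(2\sqrt{\epsilon_0})$ term. At the same time, the Polyak term must not be spent during the telescoping. I would keep $-\delta_t/(2\eta_t)$ as a separate sum throughout rather than merging it into the distance recursion, so that the monotonicity argument is applied only to the distance terms.
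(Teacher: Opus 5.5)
Your proposal is correct and follows the paper's proof essentially step for step. The shared steps are the comparator shift into $\calX_\rho$ via the error bound, the strong-Pythagorean per-round inequality with $-\delta_t/(2\eta_t)$ kept as a separate sum, Abel summation on the distance terms, and the split $a_t/\sqrt{S_t} = a_t/\sqrt{S_{t+1}} + a_t(1/\sqrt{S_t}-1/\sqrt{S_{t+1}})$ that produces the $cG_f^2/(2\sqrt{\epsilon_0})$ overhead. Part (II) likewise matches: the feasibility recursion is unrolled with the uniform bound $\eta_t \le c/\sqrt{\epsilon_0}$. The only differences are cosmetic: you compare directly against $\Pi_{\calX_\rho}(x^\star)$ rather than the minimizer $x^\star_\rho$ over $\calX_\rho$, and your Abel step gives $\sqrt{S_T}$ where the paper writes $\sqrt{S_{T+1}}$. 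Neither changes the final bound.
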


The proof is given in Appendix~\ref{app:proof-thm-adaptive}.
The regret bound~\eqref{eq:ada-regret} replaces the $O(G_f\sqrt{T})$ dependence of both the prior bound and Theorem~\ref{thm:main} with $O(\sqrt{\calG_T})$, which is never worse and can be substantially better when the cost gradients along the algorithm's trajectory are heterogeneous.
The Polyak correction term $-\sum_t \delta_t / (2\eta_t)$ now has a time-varying coefficient $1/(2\eta_t) = \sqrt{S_t}/(2c)$ that \emph{increases} over time, weighting later corrections more heavily than in the fixed-step-size analysis.

\begin{remark}[\rev{Past-gradient vs.\ current-gradient denominator}]
\label{rem:past-vs-current}
\rev{Algorithm~\ref{alg:ada-ogd-pfs} uses the standard past-gradient AdaGrad denominator $\eta_t = c/\sqrt{S_t}$ with $S_t = \epsilon_0 + \sum_{i<t}\norm{\nabla f_i(x_i)}^2$, which keeps $\eta_t$ measurable with respect to information available \emph{before} the round-$t$ play. Because the denominator excludes the current $\norm{\nabla f_t(x_t)}^2$, the AdaGrad telescoping in step (B) of the proof of Theorem~\ref{thm:adaptive} (Appendix~\ref{app:proof-thm-adaptive}) incurs an additional $cG_f^2/(2\sqrt{\epsilon_0})$ overhead beyond the leading $c\sqrt{\calG_T+\epsilon_0}$ term. This overhead is a constant (independent of $T$) and is dominated by the $O(\sqrt{T})$ leading term, so the asymptotic $O(\sqrt{\calG_T})$ rate is unaffected. If one prefers a bound without this overhead, replacing the algorithmic denominator with the current-gradient version $\eta_t = c/\sqrt{S_t+\norm{\nabla f_t(x_t)}^2}$ removes the term entirely; the resulting algorithm is no longer ``measurable-at-time-$t-1$'' but is still well-defined since $\nabla f_t(x_t)$ is observed before $\eta_t$ is used to update.}
\end{remark}

\begin{corollary}[AdaOGD-PFS with per-round feasibility]
\label{cor:ada-known}
Suppose $g(x_1) \leq -\alpha$ for some $\alpha \in (0, \epsilon]$.
Set $c = R\sqrt{2}$, $\rho = \alpha/\sqrt{T}$, and $\epsilon_0 = 2G_g^2 G_f^2 R^2/(\xi^2 \rho^2) = 2G_g^2 G_f^2 R^2 T / (\xi^2 \alpha^2)$.
Then $g(x_t) \leq 0$ for all $t \in [T]$, and
\begin{equation}
\label{eq:ada-cor}
\Reg_T \;\leq\; 2R\sqrt{2\calG_T + 2\epsilon_0} \;-\; \sum_{t=1}^T \frac{\delta_t}{2\eta_t} \;+\; \frac{G_f\alpha}{\sigma}\sqrt{T}\,.
\end{equation}
\rev{(The past-gradient overhead $cG_f^2/(2\sqrt{\epsilon_0})$ from Theorem~\ref{thm:adaptive} evaluates, under this parameter choice, to $\xi\alpha G_f/(2G_g\sqrt T) = O(1/\sqrt T)$, which is absorbed into the leading $\sqrt T$ term and is therefore omitted from~\eqref{eq:ada-cor}; see Remark~\ref{rem:past-vs-current}.)}
\end{corollary}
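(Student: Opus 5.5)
The statement to prove is Corollary~\ref{cor:ada-known}. I will derive it from Theorem~\ref{thm:adaptive}: part (II) gives feasibility and part (I) gives the regret bound, with $c = R\sqrt{2}$, $\rho = \alpha/\sqrt{T}$ and the stated $\epsilon_0$ substituted throughout.

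\textbf{Feasibility.} Since $g(x_1)\le -\alpha$ and $\rho \le \alpha$, we have $g(x_1)\le -\rho$, so $x_1\in\calX_\rho$ and $\dist(x_1,\calX_\rho)=0$. The transient term in~\eqref{eq:ada-feas} therefore vanishes. The remaining residual is
\[
\frac{cG_gG_f}{\xi\sqrt{\epsilon_0}} = \frac{R\sqrt2\, G_g G_f\, \xi\rho}{\xi\, G_gG_fR\sqrt2} = \rho ,
\]
so $g(x_t)\le \rho-\rho = 0$ for all $t$.

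The value of $\epsilon_0$ was chosen exactly so that this cancellation happens. Note that if $\sqrt2$ is dropped from the definition of $\epsilon_0$, the residual becomes $\sqrt2\rho$ and the cancellation fails. So this step needs $\sqrt{\epsilon_0}=\sqrt2\,G_gG_fR/(\xi\rho)$ exactly, which holds for the stated $\epsilon_0$. I will also check the admissibility condition $\rho\in[0,\epsilon]$: it holds because $\alpha\le\epsilon$ and $T\ge1$.

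\textbf{Regret.} With $c=R\sqrt2$ the leading coefficient is $2R^2/c + c = 2\sqrt2 R$, giving $2R\sqrt{2(\calG_T+\epsilon_0)}$. The shrinkage term is $G_f\rho T/\sigma = G_f\alpha\sqrt T/\sigma$. The Polyak term $-\sum_t\delta_t/(2\eta_t)$ carries over unchanged.

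The overhead term is
\[
\frac{cG_f^2}{2\sqrt{\epsilon_0}} = \frac{R\sqrt2\, G_f^2\, \xi\rho}{2\sqrt2\, G_gG_fR} = \frac{\xi\alpha G_f}{2G_g\sqrt T}.
\]

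\textbf{Main obstacle.} The delicate point is the claim that this overhead can be omitted from~\eqref{eq:ada-cor}. Literally, the bound requires absorbing it into the leading term. I would justify this by noting that $\sqrt{2\epsilon_0}\ge 2G_gG_fR\sqrt T/(\xi\alpha)$. Since $\xi\le 1$, $G_f\le G_g\cdot(G_f/G_g)$ and $\alpha\le\epsilon\le 2RG_g$ (the Lipschitz range of $g$ on $R\calB$), the leading term $2R\sqrt{2\epsilon_0}$ dominates the overhead by a large margin. The inequality then holds with the overhead absorbed, up to replacing $2\epsilon_0$ by a slightly larger constant.

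If an exact statement is required, I would instead state the bound with the explicit $O(1/\sqrt T)$ term added, as the parenthetical in the corollary indicates. That is the honest form of the result.
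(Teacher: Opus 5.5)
Your argument follows the paper's proof step for step. You get $\dist(x_1,\calX_\rho)=0$ from $\rho=\alpha/\sqrt T\le\alpha$, the residual $cG_gG_f/(\xi\sqrt{\epsilon_0})$ cancels to $\rho$, $2R^2/c+c=2\sqrt2R$, $G_f\rho T/\sigma=G_f\alpha\sqrt T/\sigma$, and the overhead evaluates to $\xi\alpha G_f/(2G_g\sqrt T)$. The paper then, like you, just declares this overhead ``absorbed into the leading term.''

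The step you flag is the one place where your argument does not give the stated inequality. ``The leading term dominates by a large margin'' is not an inequality. Enlarging $2\epsilon_0$, or appending the $O(1/\sqrt T)$ term, proves a different bound from \eqref{eq:ada-cor}.

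The missing idea is that no enlargement is needed. In the proof of Theorem~\ref{thm:adaptive}, the bound on (B) is $c(\sqrt{\calG_T+\epsilon_0}-\sqrt{\epsilon_0})+cG_f^2/(2\sqrt{\epsilon_0})$, and the $-c\sqrt{\epsilon_0}$ is thrown away only in the final line. Keep it. Then $-c\sqrt{\epsilon_0}+cG_f^2/(2\sqrt{\epsilon_0})\le 0$ if and only if $\epsilon_0\ge G_f^2/2$. For the chosen $\epsilon_0$ this is equivalent to $\xi\alpha\le 2G_gR\sqrt T$.

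That condition follows from $\xi\le 1$, $T\ge 1$, and the bound $\alpha\le 2RG_g$ you already invoke. To justify $\alpha\le 2RG_g$: $g\ge 0$ on the sphere $\norm{z}=R$, since $\calX\subseteq R\calB$ and $g$ is continuous. Also $g$ is $G_g$-Lipschitz on $R\calB$, so $\alpha\le g(z)-g(x_1)\le 2RG_g$.

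With this, \eqref{eq:ada-cor} holds exactly as written, with the constants unchanged. This is slightly stronger than what the paper's own proof actually establishes.
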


\begin{remark}[\rev{Scope of the ``no knowledge of $G_f$'' claim}]
\label{rem:gf-tradeoff}
\rev{We make this trade-off explicit. The regret bound of Theorem~\ref{thm:adaptive} for AdaOGD-PFS truly does \emph{not} require knowledge of $G_f$: the algorithm is run with $\eta_t = c/\sqrt{S_t}$ using only the cost gradients observed online, and the bound $(2R^2/c + c)\sqrt{\calG_T + \epsilon_0}$ scales in $\calG_T$ with constants that do not contain $G_f$. The per-round feasibility guarantee in Corollary~\ref{cor:ada-known}, however, is established by tuning $\epsilon_0$ so that the worst-case step size $\eta_1 = c/\sqrt{\epsilon_0}$ is small enough; the specific choice $\epsilon_0 = 2G_g^2 G_f^2 R^2/(\xi^2\alpha^2)\cdot T$ does depend on $G_f$, $G_g$, $\xi$ and $\alpha$. Thus the slogan ``$O(\sqrt{\calG_T})$ regret without knowing $G_f$'' applies to the \emph{regret rate}, while \emph{per-round} feasibility under our analysis still uses $G_f$ to set the regularizer. When $G_f$ is unknown, one obtains per-round feasibility only up to the same uniform-$G_f$ slack as the fixed-step algorithm. We make this explicit so that the reader is not misled into expecting feasibility guarantees that are themselves $G_f$-free, and we explicitly disclaim that this paper provides a $G_f$-free alternative parameter choice that simultaneously achieves $O(\sqrt{\calG_T})$ regret and per-round feasibility; constructing such a doubly $G_f$-free schedule (perhaps via a doubling trick on an online estimate of $G_f$) is left for future work.}
\end{remark}

The progression from the prior bound to Theorem~\ref{thm:main} and subsequently to Theorem~\ref{thm:adaptive} reflects two distinct advancements: developing a tighter analysis for the existing algorithm, followed by the introduction of a new algorithm with a data-dependent rate. Each step constitutes a strict improvement, and a comprehensive comparison is provided in Table~\ref{tab:comparison}.


\section{Proof Overview}
\label{sec:proofs}

All proofs are deferred to Appendix~\ref{app:proofs}.
Here we summarize the proof architecture for Theorem~\ref{thm:main}.
The argument proceeds via five supporting lemmas.
Lemma~\ref{lem:enhanced-projection} provides the enhanced projection inequality that retains $\delta_t$, which is the slack in the strong projection inequality \citep[Cor.~4.10]{bauschke2017convex} that the standard worst-case OCO argument does not need to track: for all $x \in \calX_\rho$, $\norm{x_{t+1} - x}^2 \leq \norm{y_t - x}^2 - \delta_t$.
Lemma~\ref{lem:error-bound} is a constraint error bound relating $\dist(x, \calX_\rho)$ to $[g(x)+\rho]_+ / \sigma$.
Lemma~\ref{lem:contraction} establishes geometric contraction $\dist^2(x^+, \calX_\rho) \leq \gamma \cdot \dist^2(x, \calX_\rho)$ of the Polyak step.
Lemmas~\ref{lem:feasibility-recursion} and~\ref{lem:recursion-expansion} develop the feasibility recursion and its expansion.
The regret bound then follows by a telescoping argument that, unlike the standard analysis, retains both $\delta_t$ and $\norm{\nabla f_t(x_t)}^2$ without relaxation.
The proof of Theorem~\ref{thm:adaptive} adapts this telescoping to time-varying step sizes via Abel summation.


\section{Experiments}
\label{sec:experiments}

We validate the theoretical results on synthetic constrained OCO instances and verify that the refined bounds (Theorems~\ref{thm:main}~and~\ref{thm:adaptive}) are valid upper bounds on the actual regret, that the improvement over the prior bound is substantial and robust, and that AdaOGD-PFS achieves competitive actual regret compared to fixed-step-size OGD-PFS.

\subsection{Setup}

\paragraph{Problem instances.}
We consider three constrained OCO problems, all with dimension $d = 10$, bounding radius $R = 5$, and time horizon $T = 10{,}000$. The first configuration applies a ball constraint $g(x) = \norm{x}^2/r^2 - 1$ with $r = 2$ to linear costs $f_t(x) = a_t^\top x$, where the cost vectors are generated as $a_t = \bar{a} + \epsilon_t$ with $\bar{a} = e_1$ and $\epsilon_t \sim \mathcal{N}(0, 0.25\, I_d)$. A heavy-tailed variant of this setting uses the identical constraint but samples perturbations from $\epsilon_t \sim \mathcal{N}(0, 1.44\, I_d)$, which introduces occasional large gradient norms that inflate the worst-case bound $G_f$. The final configuration features a halfspace constraint $g(x) = e_1^\top x - 2$ paired with linear costs, where the mean component of the cost vector continuously pushes the sequence toward the feasibility boundary.

\paragraph{Methods.}
For each problem, we compare two methods. The first is OGD-PFS (Algorithm~\ref{alg:ogd-pfs}), configured with $\eta = 2R/(G_f\sqrt{T})$ and $\rho = 0$, which we evaluate under both the prior bound and our refined bound (Theorem~\ref{thm:main}). The second method is AdaOGD-PFS (Algorithm~\ref{alg:ada-ogd-pfs}), configured with $c = R\sqrt{2}$, $\epsilon_0 = G_f^2$, and $\rho = 0$, which is evaluated under Theorem~\ref{thm:adaptive}. All results are averaged over five random seeds. \rev{To avoid favouring our bound through an inflated worst-case constant, in every run we set $G_f := \max_{t \in [T]} \norm{\nabla f_t(x_t)}$, i.e., the smallest scalar that is a valid Lipschitz envelope along the executed trajectory. Consequently the prior bound is evaluated at its tightest possible value of $G_f$, and any reported improvement is a strict tightening on top of the best achievable worst-case constant.}

\subsection{Bound Validity and Improvement}

Table~\ref{tab:exp-main} reports the actual regret, the three bounds, and the key diagnostic quantities for each problem instance.

\begin{table}[t]
\centering
\caption{Experimental results ($T = 10{,}000$, $d = 10$, $\rho = 0$, means over 5 seeds).
``Prior'' is the bound of \citet{hutchinson2025online}; ``Thm.~1'' is our refined bound; ``Thm.~2'' is the AdaOGD-PFS bound.
All bounds are numerically verified to upper-bound the actual regret in every run.
\rev{The $\Reg_T$ column reports the actual regret of OGD-PFS (Algorithm~\ref{alg:ogd-pfs}). The corresponding actual regret of AdaOGD-PFS (Algorithm~\ref{alg:ada-ogd-pfs}) is reported separately in Appendix~\ref{app:additional-experiments}: $373$ (Ball+Linear), $1{,}173$ (Ball+Linear HT), and $148$ (Halfspace); both algorithms remain well below their respective bounds.}}
\label{tab:exp-main}
\small
\begin{tabular}{l r r r r c c}
\toprule
\textbf{Problem} & $\Reg_T$ & \textbf{Prior} & \textbf{Thm.~1} & \textbf{Thm.~2} & $\calG_T/(G_f^2 T)$ & $|\calA|/T$ \\
\midrule
Ball+Linear      & 235  & 3{,}541 & 2{,}089 & 2{,}150 & 0.281 & 0.969 \\
Ball+Linear (HT) & 643  & 7{,}486 & 4{,}650 & 5{,}164 & 0.270 & 0.637 \\
Halfspace        & 173  & 3{,}897 & 2{,}239 & 2{,}295 & 0.314 & 0.994 \\
\bottomrule
\end{tabular}
\end{table}

\begin{figure}[t]
\centering
\includegraphics[width=0.75\textwidth]{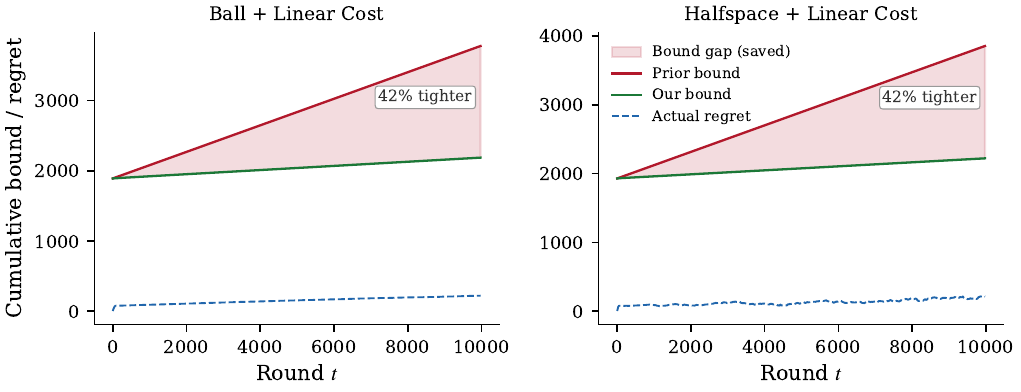}
\caption{Cumulative regret and bounds over $T = 10{,}000$ rounds for ball-constrained (left) and halfspace-constrained (right) problems.
The shaded region between the prior bound (red) and our refined bound (green) represents the improvement from Theorem~\ref{thm:main}.
Both bounds are valid upper bounds on the actual regret (blue dashed).
\rev{The ``actual regret'' curve in both panels is that of OGD-PFS (Algorithm~\ref{alg:ogd-pfs}). The actual regret of AdaOGD-PFS (Algorithm~\ref{alg:ada-ogd-pfs}) is omitted from these panels for visual clarity and is reported numerically in Table~\ref{tab:exp-main} and discussed in Appendix~\ref{app:additional-experiments}; Figure~\ref{fig:adaptive} provides a three-way visual comparison.}}
\label{fig:bound-gap}
\end{figure}

Several observations are noteworthy.
First, the bound validity is confirmed: the refined bounds (Theorems~\ref{thm:main} and~\ref{thm:adaptive}) are valid upper bounds on $\Reg_T$ in all runs, a necessary sanity check for the theoretical analysis.
Second, both refined bounds improve substantially over the prior: Theorem~\ref{thm:main} achieves a 38--43\% reduction, while Theorem~\ref{thm:adaptive} achieves a 31--41\% reduction.
Third, the data-dependent gradient ratio $\calG_T / (G_f^2 T)$ ranges from 0.27 to 0.31, confirming that the worst-case gradient constant $G_f$ significantly overestimates the actual gradient magnitudes.
The active fraction $|\calA|/T$ ranges from 0.64 to 0.99, indicating that the Polyak step is frequently invoked.
\rev{Fourth, and to be transparent: in every reported instance the AdaOGD-PFS bound (column~Thm.~2 in Table~\ref{tab:exp-main}: 2150, 5164, 2295) is slightly larger than the fixed-step Theorem~\ref{thm:main} bound (2089, 4650, 2239), because the fixed step $\eta=2R/(G_f\sqrt{T})$ is tuned with oracle knowledge of $G_f$. The intended advantage of AdaOGD-PFS is therefore not to dominate Theorem~\ref{thm:main} numerically when $G_f$ is known and oracle-tuned, but to obtain an $O(\sqrt{\calG_T})$ rate (rather than $O(G_f\sqrt{T})$) with constants that are independent of $G_f$ in the leading term; this advantage is most visible when $G_f$ is unknown or substantially overestimated. See Remark~\ref{rem:gf-tradeoff} for the precise scope of this claim.}

\subsection{Improvement Decomposition}

Table~\ref{tab:exp-decomp} decomposes the Theorem~\ref{thm:main} improvement into its two components.

\begin{table}[t]
\centering
\caption{Decomposition of the bound improvement (Corollary~\ref{cor:improvement}).
Values are mean $\pm$ std over 5 seeds.}
\label{tab:exp-decomp}
\small
\begin{tabular}{l c c c}
\toprule
\textbf{Problem} & \textbf{Gradient (\%)} & \textbf{Polyak (\%)} & \textbf{Total (\%)} \\
\midrule
Ball+Linear      & $35.9 \pm 1.2$ & $5.0 \pm 0.4$ & $41.0 \pm 0.8$ \\
Ball+Linear (HT) & $36.5 \pm 1.2$ & $1.4 \pm 0.1$ & $37.8 \pm 1.1$ \\
Halfspace        & $34.3 \pm 1.0$ & $8.2 \pm 0.5$ & $42.5 \pm 0.5$ \\
\bottomrule
\end{tabular}
\end{table}

The gradient refinement ($\calG_T$ replacing $G_f^2 T$) is the dominant source of improvement, contributing 34--37\% across all problems. The Polyak correction ($\calP_T$) provides an additional 1--8\%, with the largest contribution on the halfspace problem where the constraint is active in 99.4\% of rounds. Both components are strictly positive in all instances, consistent with the theoretical guarantee $\Delta_T \geq 0$ (Corollary~\ref{cor:improvement}). Additional experiments are presented in Appendix~\ref{app:additional-experiments}. These include a comparison of AdaOGD-PFS versus fixed-step-size OGD-PFS, bound component decomposition, sensitivity analysis over $(\eta, \rho)$, and distributional analysis of gradient norms and Polyak corrections.

\subsection{Active Fraction and Its Connection to the Polyak Correction}
\label{sec:active-fraction}

By Proposition~\ref{prop:lower}, the Polyak correction can be written as a sum over the active set $\calA = \{t : g_t + s_t^\top(y_t - x_t) + \rho > 0\}$, so the active fraction $|\calA|/T$ is the natural ``how often does the Polyak step do non-trivial work'' diagnostic. Empirically the two quantities track each other but with an instance-dependent slope. On the halfspace problem the active fraction is highest ($0.994$) and so is the Polyak share of the improvement ($8.2\%$); on the heavy-tailed ball problem the active fraction is the lowest ($0.637$) and the Polyak share drops to $1.4\%$; the standard ball problem sits in between ($0.969$ active fraction, $5.0\%$ Polyak share).

Beyond this rough monotone trend, the active fraction alone does \emph{not} determine the Polyak share, because the per-round displacement $\delta_t$ depends on the squared linearised violation $(g_t + s_t^\top(y_t - x_t) + \rho)^2$, not just on its sign. On the heavy-tailed ball instance the active fraction is moderate but the typical $\delta_t$ is small, because the heavy-tailed noise randomises the direction of $y_t - x_t = -\eta\nabla f_t(x_t)$ across rounds, so the alignment $s_t^\top(y_t - x_t)$ on the constraint normal is often small even when $g_t$ alone would trigger an active round; this explains the disproportionately small Polyak share there.

A related observation is that the active fraction is essentially uncorrelated with how the \emph{algorithm itself} performs across instances. The two ball problems have very different active fractions ($0.97$ vs.\ $0.64$) but comparable OGD-PFS regret normalised by $G_f\sqrt T$, while the halfspace instance has the highest active fraction but the lowest absolute regret. The active fraction therefore predicts how much of the Polyak \emph{correction} is excited, not how well either algorithm does in terms of actual regret. The adversarial experiment in Section~\ref{sec:exp-adversarial} makes the same point quantitatively: pushing the active fraction to $0.998$ multiplies the Polyak share roughly tenfold ($5.0\%\to 49.9\%$) while leaving the prior bound unchanged.

\subsection{Adversarial Worst-Case Gradient Sequence}
\label{sec:exp-adversarial}

A natural concern is whether the 38--43\% headline improvement merely reflects Gaussian concentration of $\norm{\nabla f_t}$ below $G_f$, in which case any AdaGrad-style analysis would extract the same gain. To isolate the contribution of the Polyak correction $\calP_T$ from that of the gradient refinement, we add a strictly adversarial experiment using the same Ball+Linear configuration ($T=10{,}000$, $d=10$, $R=5$, $r=2$, $\rho=0$, $\eta=2R/(G_f\sqrt{T})$, five seeds, identical to Table~\ref{tab:exp-main}), except that the cost gradient is held constant at the worst case $\nabla f_t(x_t) = G_f e_1$ for every round. Under this sequence $\calG_T = G_f^2 T$ exactly, so the gradient-refinement term contributes \emph{zero} improvement by construction; any remaining gap is attributable solely to the Polyak correction. Table~\ref{tab:exp-adversarial} reports the result.

\begin{table}[t]
\centering
\caption{\rev{Adversarial worst-case experiment.
Setup is identical to the Ball+Linear row of Table~\ref{tab:exp-main} except the cost gradient is held at the worst case $\nabla f_t(x_t)=G_fe_1$ for every $t$, so $\calG_T=G_f^2T$ exactly and the gradient-refinement contribution is zero by construction. Five seeds; the refined bound remains a valid upper bound on the actual regret in every run.}}
\label{tab:exp-adversarial}
\small
\begin{tabular}{l r r r c c c}
\toprule
\textbf{Setting} & $\Reg_T$ & \textbf{Prior} & \textbf{Thm.~\ref{thm:main}} & \textbf{Grad.\ (\%)} & \textbf{Polyak (\%)} & \textbf{Total (\%)} \\
\midrule
Gaussian (paper setup, sanity)               & 235 & 3{,}541 & 2{,}089 & $35.9$ & $5.0$  & $41.0$ \\
Adversarial $\nabla f_t = G_f e_1\ \forall t$ &  74 & 3{,}541 & 1{,}774 & $0.0$  & $49.9$ & $49.9$ \\
\bottomrule
\end{tabular}
\end{table}

Two facts are worth highlighting. First, the gradient-refinement contribution drops exactly to $0\%$, as predicted (in the Gaussian baseline it was $35.9\%$). This confirms that the gradient gain on the Gaussian benchmark is essentially a concentration effect that any data-dependent gradient analysis would capture. Second, and more importantly, the Polyak correction does \emph{not} vanish under the adversarial sequence; it remains strictly positive and in fact \emph{larger} than in the Gaussian case ($49.9\%$ vs.\ $5.0\%$). The reason is that the constant adversarial push keeps the iterate at the boundary in essentially every round (active fraction $\approx 0.998$), and the per-round Polyak displacement is also amplified by the smaller $\eta$ entering the coefficient $1/(2\eta)$. The refined bound is therefore strictly tighter than the prior bound by $49.9\%$ \emph{even when the gradient-refinement term is zero}, demonstrating that the Polyak correction $\calP_T$ contributes a tightening that is structurally independent of, and not subsumed by, any AdaGrad-style gradient adaptation.


\section{Conclusion}
\label{sec:conclusion}

We have presented a refined regret analysis of OGD with Polyak feasibility steps in constrained online convex optimization. Our analysis identifies two quantities that the standard worst-case argument does not track, namely the data-dependent gradient accumulation and the Polyak correction, and retains both as data-dependent quantities. The resulting bound is uniformly no worse than the prior state-of-the-art and achieves 38--43\% tighter bounds in experiments. Motivated by these analytical insights, we further propose AdaOGD-PFS, a new algorithm with adaptive step sizes that achieves an $O(\sqrt{\calG_T})$ regret bound while maintaining per-round feasibility. Several directions for future work emerge from the current limitations. Because $\calP_T$ is a post-hoc quantity evaluated after execution, the resulting bounds are inherently instance-dependent rather than worst-case guarantees. Furthermore, the feasibility bound remains unchanged from prior literature, and tightening it via the actual gradient norms $\norm{\nabla f_t(x_t)}$ instead of the worst-case $G_f$ presents a valuable theoretical opportunity. Another unresolved challenge is integrating the Polyak feasibility step with optimistic online gradient descent , as the standard optimistic telescoping introduces cross terms that are difficult to control.

\rev{Beyond the limitations of the present setting, the two devices used in our analysis are not specific to OGD with a single Polyak feasibility step, which suggests several natural extensions. The data-dependent gradient quantity $\calG_T = \sum_t\norm{\nabla f_t(x_t)}^2$ arises whenever one telescopes a Bregman-style regret identity without invoking the uniform bound $\norm{\nabla f_t(x_t)}\leq G_f$ at the last step; it is therefore expected to be retainable in mirror descent, online Newton step, and Frank--Wolfe variants with linearised constraints, as well as in stochastic, bandit, and delayed-feedback settings where each round still produces a single gradient sample. The Polyak correction $\calP_T = \sum_t\delta_t$, on the other hand, is the cumulative slack in the strong projection inequality, so an analogous correction can be defined whenever the algorithm contains a projection or proximal step onto a convex set: in particular, projected OGD onto an arbitrary convex set, alternating projection and Dykstra-style schemes, multi-constraint Polyak steps (one half-space per constraint per round), and constrained stochastic and bandit OCO. In each of these settings the slack term can be summed and subtracted from the regret bound, yielding a tightening that is structurally independent of any AdaGrad-style gradient adaptation. Quantifying these instantiations is left to future work.}

\bibliography{main}
\bibliographystyle{tmlr}

\newpage
\appendix

\section{Proofs}
\label{app:proofs}

\subsection{Lemma~\ref{lem:enhanced-projection}: Enhanced Polyak Projection Property}

\begin{lemma}[Enhanced Polyak projection property]
\label{lem:enhanced-projection}
Let Assumptions~\ref{asm:bounded}--\ref{asm:sublower} hold and $\rho \in [0, \epsilon]$.
Then Step~9 of Algorithm~\ref{alg:ogd-pfs} satisfies: for all $x \in \calX_\rho$,
\begin{equation}
\label{eq:L1}
\norm{x_{t+1} - x}^2 \leq \norm{y_t - x}^2 - \delta_t\,.
\end{equation}
\end{lemma}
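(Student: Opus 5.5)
The plan is to split Step~9 into its two component maps. Write $z_t := y_t - \frac{[g_t + s_t^\top(y_t - x_t) + \rho]_+}{\norm{s_t}^2}\, s_t$, so that $x_{t+1} = \Pi_{R\calB}(z_t)$. The argument has three steps: (i) show $z_t = \Pi_{H_t}(y_t)$ and $\calX_\rho \subseteq H_t$; (ii) apply the strong Pythagorean inequality on $H_t$ to pick up the slack $\delta_t$; (iii) pass through the clipping $\Pi_{R\calB}$ using non-expansiveness.

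For (i), the closed form is the standard projection onto a half-space $\{x : a^\top x \le b\}$ with $a = s_t \neq 0$ and $b = s_t^\top x_t - g_t - \rho$. Indeed, $a^\top y_t - b = g_t + s_t^\top(y_t - x_t) + \rho$, and the projection moves $y_t$ along $-a$ by $[a^\top y_t - b]_+/\norm{a}^2$. This also gives the identity $\norm{y_t - z_t}^2 = \delta_t$ claimed in Definition~\ref{def:polyak}. For the inclusion, take $x \in \calX_\rho$. By the subgradient inequality, $g_t + s_t^\top(x - x_t) \le g(x) \le -\rho$, so $x \in H_t$.

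For (ii), fix $x \in \calX_\rho \subseteq H_t$ and expand
\[
\norm{y_t - x}^2 = \norm{y_t - z_t}^2 + \norm{z_t - x}^2 + 2\langle y_t - z_t,\, z_t - x\rangle .
\]
The variational characterization of the projection onto the closed convex set $H_t$ gives $\langle y_t - z_t,\, x - z_t\rangle \le 0$, so the cross term is nonnegative. Hence $\norm{z_t - x}^2 \le \norm{y_t - x}^2 - \delta_t$, which is exactly \citep[Cor.~4.10]{bauschke2017convex}.

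For (iii), use $\calX_\rho \subseteq \calX \subseteq R\calB$ from Assumption~\ref{asm:bounded}, so that $x = \Pi_{R\calB}(x)$. Non-expansiveness of $\Pi_{R\calB}$ then yields $\norm{x_{t+1} - x} \le \norm{z_t - x}$. Chaining this with (ii) proves \eqref{eq:L1}. We deliberately use only the weak inequality at the clipping stage, so the ball projection's own slack is discarded.

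The main obstacle is the degenerate branch $s_t = 0$, where the formula for $\delta_t$ divides by zero. There I would argue as follows. If $0 \in \partial g(x_t)$, then $x_t$ is a global minimizer of $g$. Assumption~\ref{asm:sublower} gives a point with $g = -\epsilon$, so $g_t \le -\epsilon \le -\rho$. Then $H_t = \mathbb{R}^d$, $\Pi_{H_t}(y_t) = y_t$, and the natural convention is $\delta_t = 0$. In this case the algorithm sets $x_{t+1} = \Pi_{R\calB}(y_t)$, and step (iii) alone gives the claim. The same argument also covers the case where $\calX_\rho$ is empty, in which the statement is vacuous.
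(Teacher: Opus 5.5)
Your proposal is correct and follows the paper's proof essentially step for step. It uses the same $s_t = 0$ versus $s_t \neq 0$ case split, the identification $x_{t+1} = \Pi_{R\calB}(\Pi_{H_t}(y_t))$ with $\calX_\rho \subseteq H_t$ via the subgradient inequality, the strong Pythagorean inequality from the variational characterization of $\Pi_{H_t}$, and non-expansiveness of $\Pi_{R\calB}$ using $\calX_\rho \subseteq R\calB$. (Your aside about $\calX_\rho = \emptyset$ is unnecessary, since Assumption~\ref{asm:sublower} with $\rho \le \epsilon$ already guarantees $\calX_\rho \neq \emptyset$.)
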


\begin{proof}
\textbf{Case $s_t = 0$:}
Since $0 \in \partial g(x_t)$, $x_t$ is a global minimizer of $g$.
By Assumption~\ref{asm:sublower}, $\calX_\rho \neq \emptyset$ (there exists $w$ with $g(w) = -\epsilon \leq -\rho$), so $g(x_t) \leq g(w) \leq -\rho$, i.e., $x_t \in \calX_\rho$.
The algorithm gives $x_{t+1} = \Pi_{R\calB}(y_t)$ and $\delta_t = 0$.
Since $x \in \calX_\rho \subseteq R\calB$, by non-expansiveness of $\Pi_{R\calB}$:
$\norm{x_{t+1} - x}^2 \leq \norm{y_t - x}^2 = \norm{y_t - x}^2 - \delta_t$.

\textbf{Case $s_t \neq 0$:}
We first verify that the algorithm step is equivalent to $x_{t+1} = \Pi_{R\calB}(\Pi_{H_t}(y_t))$.
If $g_t + s_t^\top(y_t - x_t) + \rho \leq 0$, then $y_t \in H_t$, so $\Pi_{H_t}(y_t) = y_t$ and the equivalence is immediate.
If $g_t + s_t^\top(y_t - x_t) + \rho > 0$, then $\Pi_{H_t}(y_t) = y_t - \lambda_t s_t$ with $\lambda_t = (g_t + s_t^\top(y_t - x_t) + \rho)/\norm{s_t}^2$, which matches the algorithm.

Next, $H_t \supseteq \calX_\rho$: for any $x \in \calX_\rho$, by convexity of $g$ and $s_t \in \partial g(x_t)$,
$g_t + s_t^\top(x - x_t) + \rho \leq g(x) + \rho \leq 0$, so $x \in H_t$.

For the enhanced non-expansiveness, let $p_t = \Pi_{H_t}(y_t)$.
By the variational characterization of projection onto $H_t$, for all $v \in H_t$:
$(y_t - p_t)^\top(v - p_t) \leq 0$.
Taking $v = x \in H_t$ and expanding $\norm{y_t - x}^2 = \norm{y_t - p_t}^2 + 2(y_t - p_t)^\top(p_t - x) + \norm{p_t - x}^2$, we obtain $(y_t - p_t)^\top(p_t - x) \geq 0$, hence
$\norm{y_t - x}^2 \geq \delta_t + \norm{p_t - x}^2$.

Rearranging: $\norm{p_t - x}^2 \leq \norm{y_t - x}^2 - \delta_t$.
Finally, since $x \in \calX_\rho \subseteq R\calB$, non-expansiveness of $\Pi_{R\calB}$ gives
$\norm{x_{t+1} - x}^2 = \norm{\Pi_{R\calB}(p_t) - x}^2 \leq \norm{p_t - x}^2 \leq \norm{y_t - x}^2 - \delta_t$.
\end{proof}

To clarify the relation of this lemma to the generalized Pythagorean projection analysis, we note that the inequality $\norm{p_t - x}^2 \leq \norm{y_t - x}^2 - \delta_t$ used above is the \emph{strong} (rather than weak) form of the Pythagorean inequality for projection onto a closed convex set, which is a textbook result in convex analysis \citep[Corollary~4.10]{bauschke2017convex}: for every closed convex set $C$ and $y\in\mathbb{R}^d$, $\norm{\Pi_C(y) - x}^2 + \norm{y - \Pi_C(y)}^2 \leq \norm{y - x}^2$ for all $x\in C$. The same inequality is also stated as the \emph{Generalized Pythagorean Theorem} in standard first-order-methods textbooks (e.g., \citealp[Theorem~9.8]{beck2017firstorder}; this is the form often cited in OCO under the name ``generalized Pythagorean projection''). Prior worst-case OCO analyses of the Polyak feasibility step use only the weak form $\norm{p_t - x}^2 \leq \norm{y_t - x}^2$ because the additional slack $\delta_t$ is not needed to derive the $O(\sqrt T)$ rate. Our contribution at this point is therefore not to discover a new inequality, but to recognise that the slack $\delta_t = \norm{y_t - \Pi_{H_t}(y_t)}^2$ equals a measurable, algorithmically meaningful Polyak-projection quantity that, once summed, gives a strictly negative correction $-\frac{1}{2\eta}\calP_T$ in the regret bound. To our knowledge this is the first OCO regret bound to make the strong-inequality slack explicit as a named, data-dependent quantity that can be evaluated post-hoc on any trajectory.

\subsection{Lemma~\ref{lem:error-bound}: Constraint Error Bound}

\begin{lemma}[Constraint error bound]
\label{lem:error-bound}
Let Assumptions~\ref{asm:gradg}--\ref{asm:sublower} hold and $\rho \in [0, \epsilon]$.  Then for all $x \in R\calB$,
\begin{equation}
\label{eq:L2}
\dist(x, \calX_\rho) \leq \frac{1}{\sigma}[g(x) + \rho]_+\,.
\end{equation}
\end{lemma}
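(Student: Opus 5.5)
The plan is to compare with the deeper sublevel set $\calX_\epsilon := \{x : g(x) \leq -\epsilon\} \subseteq \calX_\rho$, whose boundary is controlled by Assumption~\ref{asm:sublower}. From its projection point we walk back toward $x$ along a straight ray until we first enter $\calX_\rho$.

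If $g(x) \leq -\rho$, then $x \in \calX_\rho$, so $\dist(x,\calX_\rho) = 0$ and there is nothing to prove. Assume therefore $g(x) > -\rho \geq -\epsilon$.

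\textbf{Step 1: $\calX_\epsilon$ is nonempty and satisfies a Slater condition.}
\begin{itemize}
\item Assumption~\ref{asm:sublower} gives a point with $g = -\epsilon$, so $\calX_\epsilon \neq \emptyset$.
\item $\calX_\epsilon$ is closed and convex because $g$ is convex and finite, hence continuous.
\item We have $\min g < -\epsilon$. Otherwise every point of $\calX'$ would be a global minimizer, so $0 \in \partial g$ there. This contradicts $\norm{s} \geq \sigma > 0$ on $\calX'$.
\end{itemize}
Consequently, at every $q$ with $g(q) = -\epsilon$, the normal cone satisfies $N_{\calX_\epsilon}(q) = \mathrm{cone}\,\partial g(q)$. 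This is the standard subdifferential description of the normal cone to a sublevel set under Slater.

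\textbf{Step 2: locate the projection and its normal.}
\begin{itemize}
\item Let $q := \Pi_{\calX_\epsilon}(x)$ and $L := \norm{x - q} > 0$.
\item Since $x \notin \calX_\epsilon$ and $g$ is continuous, $q$ lies on the boundary, so $g(q) = -\epsilon$, i.e.\ $q \in \calX'$.
\item By the projection optimality condition, $x - q \in N_{\calX_\epsilon}(q)$.
\item By Step 1, $x - q = \lambda s_q$ for some $\lambda > 0$ and $s_q \in \partial g(q)$, with $\norm{s_q} \geq \sigma$.
\end{itemize}

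\textbf{Step 3: one-dimensional convexity along the ray.}
\begin{itemize}
\item Set $u := (x - q)/L$ and $\psi(\tau) := g(q + \tau u)$ for $\tau \geq 0$. Then $\psi$ is convex, $\psi(0) = -\epsilon$ and $\psi(L) = g(x) > -\rho$.
\item The right derivative satisfies $\psi'_+(0) = \max_{s \in \partial g(q)} s^\top u \geq s_q^\top u = \norm{s_q} \geq \sigma$.
\item Since the slopes of a convex function are nondecreasing, $\psi(\tau_2) - \psi(\tau_1) \geq \sigma(\tau_2 - \tau_1)$ for all $0 \leq \tau_1 \leq \tau_2$.
\end{itemize}

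\textbf{Step 4: find the entry point into $\calX_\rho$ and bound the distance.}
\begin{itemize}
\item By continuity, there is a $\tau^\ast \in [0, L)$ with $\psi(\tau^\ast) = -\rho$.
\item The point $z := q + \tau^\ast u$ lies in $\calX_\rho$.
\item Its distance to $x$ is $\norm{x - z} = L - \tau^\ast$.
\item Applying Step 3 on $[\tau^\ast, L]$ gives $g(x) + \rho = \psi(L) - \psi(\tau^\ast) \geq \sigma(L - \tau^\ast)$.
\end{itemize}
Combining the last two items,
\[
\dist(x,\calX_\rho) \leq L - \tau^\ast \leq \frac{1}{\sigma}\,[g(x) + \rho]_+,
\]
which is the claim.

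The restriction $x \in R\calB$ and Assumption~\ref{asm:gradg} are not really needed for this argument. They only matter for consistency with the domain on which subgradients are bounded.

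\textbf{Main obstacle.} The delicate step is Step 2: certifying that the projection direction $x - q$ is a positive multiple of an actual subgradient at $q$. This is what lets the lower bound $\sigma$, which holds only on the level set $g = -\epsilon$, transfer to the directional slope of $g$ along the ray. It rests on the normal-cone formula for sublevel sets, which needs a Slater point strictly below $-\epsilon$. Step 1 extracts that Slater point from Assumption~\ref{asm:sublower} itself.

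If invoking that formula is undesirable, a fallback is to argue directly. If $x - q$ were not in $\mathrm{cone}\,\partial g(q)$, separation would give a direction $v$ with $s^\top v < 0$ for all $s \in \partial g(q)$ and $(x - q)^\top v > 0$. Moving from $q$ slightly along $v$ then stays in $\calX_\epsilon$ and gets closer to $x$, contradicting the minimality of $q$.
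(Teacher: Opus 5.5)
Your proof is correct, and it takes a genuinely different route from the paper's.

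\textbf{The paper's route.} The paper projects $x$ onto $\calX_\rho$ itself. It first shows $v=\Pi_{\calX_\rho}(x)$ lies on $\{g=-\rho\}$, using the $G_g$-Lipschitz property, and writes $x-v=\tilde\mu\,\tilde s_v$ with $\tilde s_v\in\partial g(v)$. Assumption~\ref{asm:sublower} controls subgradients only on the level $-\epsilon$, so it then needs three further ingredients:
\begin{itemize}
\item a second projection $w=\Pi_{C_\epsilon}(v)$;
\item a second normal-cone representation $v-w=\mu\hat s_w$;
\item monotonicity of $\partial g$, to lift $\norm{\tilde s_v}\ge\sigma$ up to level $-\rho$, with a separate case for $\rho=\epsilon$.
\end{itemize}
It closes with the subgradient inequality at $v$.

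\textbf{Your route.} You project onto the deeper set $\calX_\epsilon$, where Assumption~\ref{asm:sublower} applies verbatim. You then move from level $-\epsilon$ to level $-\rho$ by one-dimensional convexity along the ray from $q$ to $x$: the slope is at least $\norm{s_q}\ge\sigma$ at $q$ and never decreases. The level-crossing point $z$ serves as an explicit witness in $\calX_\rho$.

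\textbf{What each buys.} Your version uses a single normal-cone invocation, no monotonicity argument, and no case split (for $\rho=\epsilon$ you simply get $\tau^\ast=0$). It never needs to locate $\Pi_{\calX_\rho}(x)$. As you note, it also makes clear that the bound holds for every $x\in\mathbb{R}^d$ without Assumption~\ref{asm:gradg}. The paper's route, in exchange, identifies the exact projection point rather than just a feasible witness. Its monotonicity argument also yields a structural by-product: every subgradient at every point with $g\ge-\epsilon$ has norm at least $\sigma$. In other words, Assumption~\ref{asm:sublower} propagates to all higher level sets.

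\textbf{A small fix to your fallback.} In the fallback derivation of the normal-cone formula, separating $x-q$ from the closed cone $\mathrm{cone}\,\partial g(q)$ gives only $s^\top v\le 0$, not the strict inequality you claim. To get strictness, tilt $v$ by a small multiple of $\bar x-q$, where $\bar x$ is a Slater point with $g(\bar x)<-\epsilon$. This direction satisfies $s^\top(\bar x-q)\le g(\bar x)+\epsilon<0$ for all $s\in\partial g(q)$, and for a small enough multiple the tilt preserves $(x-q)^\top v>0$.
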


\begin{proof}
If $g(x) + \rho \leq 0$, then $x \in \calX_\rho$ and both sides are zero.
Assume $g_\rho(x) := g(x) + \rho > 0$, i.e., $x \notin \calX_\rho$.
Let $v = \Pi_{\calX_\rho}(x)$.

Suppose $g(v) + \rho < 0$.
Since $g$ is $G_g$-Lipschitz (Assumption~\ref{asm:gradg}), let $\delta_0 = -(g(v)+\rho)/G_g > 0$.
Then for all $w \in v + \delta_0\calB$, $g(w) + \rho \leq g(v) + \rho + G_g\delta_0 = 0$, so $v + \delta_0\calB \subseteq \calX_\rho$.
Taking $v' = v + \delta_0(x-v)/\norm{x-v} \in \calX_\rho$ gives $\norm{v'-x} = \norm{x-v} - \delta_0 < \dist(x,\calX_\rho)$, contradicting the definition of $v$.
Hence $g(v) = -\rho$.

The Slater condition for $\calX_\rho$ holds: by Assumption~\ref{asm:sublower}, $\{x : g(x) = -\epsilon\} \neq \emptyset$; if $\min_x g(x) = -\epsilon$, then there exists $\hat{x}$ with $0 \in \partial g(\hat{x})$ and $g(\hat{x}) = -\epsilon$, but Assumption~\ref{asm:sublower} requires $\norm{s} \geq \sigma > 0$ for all $s \in \partial g(\hat{x})$, a contradiction.
Hence $\min_x g(x) < -\epsilon \leq -\rho$, and Slater's condition holds.
By \citet{rockafellar1970convex} (Theorem~23.7), there exist $\tilde{\mu} > 0$ and $\tilde{s}_v \in \partial g(v)$ such that
\begin{equation}
\label{eq:normal-cone}
x - v = \tilde{\mu}\,\tilde{s}_v\,.
\end{equation}

When $\rho = \epsilon$: $g(v) = -\epsilon$, so $v \in \calX'$ and $\norm{\tilde{s}_v} \geq \sigma$ directly by Assumption~\ref{asm:sublower}.

When $\rho < \epsilon$: let $C_\epsilon = \{u : g(u) \leq -\epsilon\}$.
Since $g(v) = -\rho > -\epsilon$, we have $v \notin C_\epsilon$.
Let $w = \Pi_{C_\epsilon}(v)$.
By the same boundary argument as Step~1, $g(w) = -\epsilon$.
By the normal cone characterization (Slater's condition for $C_\epsilon$ holds since $\min g < -\epsilon$), there exist $\mu > 0$ and $\hat{s}_w \in \partial g(w)$ with $v - w = \mu\,\hat{s}_w$.
Since $g(w) = -\epsilon$, Assumption~\ref{asm:sublower} gives $\norm{\hat{s}_w} \geq \sigma$.

By monotonicity of $\partial g$ \citep{rockafellar1970convex}:
$(\tilde{s}_v - \hat{s}_w)^\top(v - w) \geq 0$.
Substituting $v - w = \mu\hat{s}_w$ and dividing by $\mu > 0$:
$\tilde{s}_v^\top\hat{s}_w \geq \norm{\hat{s}_w}^2$.
By Cauchy--Schwarz: $\norm{\tilde{s}_v}\norm{\hat{s}_w} \geq \tilde{s}_v^\top\hat{s}_w \geq \norm{\hat{s}_w}^2$, so $\norm{\tilde{s}_v} \geq \norm{\hat{s}_w} \geq \sigma$.

By convexity of $g$ and $g(v) + \rho = 0$:
$g_\rho(x) = g(x) + \rho \geq g(v) + \tilde{s}_v^\top(x-v) + \rho = \tilde{s}_v^\top(\tilde{\mu}\,\tilde{s}_v) = \tilde{\mu}\norm{\tilde{s}_v}^2$.
Hence $\tilde{\mu} \leq g_\rho(x)/\norm{\tilde{s}_v}^2$, and by~\eqref{eq:normal-cone}:
$\dist(x,\calX_\rho) = \norm{x-v} = \tilde{\mu}\norm{\tilde{s}_v} \leq \frac{g_\rho(x)}{\norm{\tilde{s}_v}} \leq \frac{g_\rho(x)}{\sigma} = \frac{[g(x)+\rho]_+}{\sigma}$.
\end{proof}

\subsection{Lemma~\ref{lem:contraction}: Geometric Contraction of the Polyak Step}

\begin{lemma}[Geometric contraction]
\label{lem:contraction}
Let Assumptions~\ref{asm:bounded}--\ref{asm:sublower} hold, $\rho \in [0,\epsilon]$, $x \in R\calB$, and $s \in \partial g(x)$ with $s \neq 0$.
Let $x^+ = \Pi_{R\calB}(x - [g(x)+\rho]_+ s / \norm{s}^2)$.  Then
\begin{equation}
\label{eq:L3}
\dist^2(x^+, \calX_\rho) \leq \gamma \cdot \dist^2(x, \calX_\rho)\,.
\end{equation}
\end{lemma}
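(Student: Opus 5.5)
Let $D := \dist(x,\calX_\rho)$ and $v := \Pi_{\calX_\rho}(x)$. The proof will use the three ingredients already established: $\calX_\rho \subseteq H$, where $H := \{z : g(x)+s^\top(z-x)+\rho\le 0\}$; the strong Pythagorean inequality from Lemma~\ref{lem:enhanced-projection}; and the error bound of Lemma~\ref{lem:error-bound}. We dispose of the trivial case first. If $g(x)+\rho\le 0$, then $x\in\calX_\rho$, so $D=0$. In this case $x^+ = \Pi_{R\calB}(x) = x$, since $x\in R\calB$. Hence $\dist(x^+,\calX_\rho)=0$ and the claim holds.

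Now assume $g(x)+\rho>0$ and set $p := x - (g(x)+\rho)s/\norm{s}^2 = \Pi_H(x)$. Note that $x\notin H$ because $g(x)+\rho>0$. By the argument in Lemma~\ref{lem:enhanced-projection}, which uses only $\calX_\rho\subseteq H$ (a consequence of convexity and $s\in\partial g(x)$), we have for every $z\in\calX_\rho$
\[
\norm{p-z}^2 \le \norm{x-z}^2 - \norm{x-p}^2 .
\]
Here
\[
\norm{x-p}^2 = \frac{(g(x)+\rho)^2}{\norm{s}^2}.
\]
Take $z=v\in\calX_\rho\subseteq R\calB$. Since projection onto $R\calB$ is non-expansive,
\[
\dist^2(x^+,\calX_\rho) \le \norm{x^+-v}^2 \le \norm{p-v}^2 \le D^2 - \frac{(g(x)+\rho)^2}{\norm{s}^2}.
\]

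It remains to lower-bound the subtracted term by $(1-\gamma)D^2 = (\sigma^2/G_g^2)D^2$. Assumption~\ref{asm:gradg} gives $\norm{s}\le G_g$. Lemma~\ref{lem:error-bound} gives $g(x)+\rho \ge \sigma D$, and this applies since $x\in R\calB$. Combining the two,
\[
\frac{(g(x)+\rho)^2}{\norm{s}^2}\ge \frac{\sigma^2 D^2}{G_g^2}.
\]
Therefore $\dist^2(x^+,\calX_\rho)\le (1-\sigma^2/G_g^2)D^2=\gamma D^2$.

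The substantive step is the error bound, and it has already been proved as Lemma~\ref{lem:error-bound}. So the main care point here is bookkeeping. We must confirm that the Pythagorean step is applied to the half-space built from the subgradient at $x$ itself, and not at some iterate $x_t$ with a shifted point $y_t$. We must also use the standing facts $\calX_\rho\subseteq R\calB$ (Assumption~\ref{asm:bounded}) and $x\in R\calB$ to justify both the clipping step and the use of the error bound.
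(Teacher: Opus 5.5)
Your proof is correct and follows the paper's argument: the trivial case, then non-expansiveness of $\Pi_{R\calB}$ against $v=\Pi_{\calX_\rho}(x)$, then a decrease of $(g(x)+\rho)^2/\norm{s}^2$, and finally $\norm{s}\le G_g$ together with Lemma~\ref{lem:error-bound}. The only difference is cosmetic. You get the decrease from the strong Pythagorean inequality for $\Pi_H$, using $\calX_\rho\subseteq H$, while the paper expands $\norm{\tilde{x}-v}^2$ directly using $s^\top(x-v)\geq g(x)+\rho$; this is the same computation in different packaging.
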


\begin{proof}
If $g(x) + \rho \leq 0$, then $x^+ = x \in \calX_\rho$ and both sides are zero.
If $g(x) + \rho > 0$, let $v = \Pi_{\calX_\rho}(x)$ and $\tilde{x} = x - \frac{g(x)+\rho}{\norm{s}^2}s$.
By non-expansiveness of $\Pi_{R\calB}$ (since $v \in R\calB$):
$\dist^2(x^+, \calX_\rho) \leq \norm{\tilde{x} - v}^2$.

Expanding $\norm{\tilde{x}-v}^2$ and using $s^\top(x-v) \geq g(x) - g(v) \geq g(x)+\rho$:
\[
\norm{\tilde{x}-v}^2 \leq \norm{x-v}^2 - \frac{(g(x)+\rho)^2}{\norm{s}^2} \leq \dist^2(x,\calX_\rho) - \frac{(g(x)+\rho)^2}{G_g^2}\,.
\]
By Lemma~\ref{lem:error-bound}, $(g(x)+\rho)^2 \geq \sigma^2 \dist^2(x,\calX_\rho)$, so the last expression is $\leq (1 - \sigma^2/G_g^2)\dist^2(x,\calX_\rho) = \gamma\,\dist^2(x,\calX_\rho)$.
\end{proof}

\subsection{Lemma~\ref{lem:feasibility-recursion}: Feasibility Recursion}

\begin{lemma}[Feasibility recursion]
\label{lem:feasibility-recursion}
Under Assumptions~\ref{asm:bounded}--\ref{asm:sublower} with $\rho \in [0,\epsilon]$, Algorithm~\ref{alg:ogd-pfs} satisfies for all $t \geq 1$:
\begin{equation}
\label{eq:L4}
\dist(x_{t+1}, \calX_\rho) \leq \sqrt{\gamma}\;\dist(x_t, \calX_\rho) + \eta\norm{\nabla f_t(x_t)}\,.
\end{equation}
\end{lemma}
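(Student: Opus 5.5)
The plan is a perturbation argument: compare the actual update $x_{t+1}$, which applies the Polyak step to $y_t$, with a \emph{virtual} Polyak step applied to $x_t$ itself. The virtual step contracts by Lemma~\ref{lem:contraction}, and the two steps differ by at most $\norm{y_t - x_t} = \eta\norm{\nabla f_t(x_t)}$. Throughout I will use that $x_t \in R\calB$ for every $t$: this holds for $x_1$ by assumption and for $t \geq 2$ because $x_t$ is an output of $\Pi_{R\calB}$.

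\textbf{Case $s_t \neq 0$.} Define the map $T_t(z) := \Pi_{R\calB}(\Pi_{H_t}(z))$, with $H_t$ the half-space in~\eqref{eq:halfspace}; it is nonempty because $s_t \neq 0$. As verified in the proof of Lemma~\ref{lem:enhanced-projection}, $x_{t+1} = T_t(y_t)$.

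Evaluate $T_t$ at $z = x_t$. The linearized constraint value there is $g_t + s_t^\top(x_t - x_t) + \rho = g(x_t) + \rho$. Hence $T_t(x_t) = \Pi_{R\calB}\bigl(x_t - [g(x_t)+\rho]_+ s_t/\norm{s_t}^2\bigr)$, which is exactly the point $x^+$ of Lemma~\ref{lem:contraction} with $x = x_t$ and $s = s_t$. That lemma, after taking square roots, gives $\dist(T_t(x_t), \calX_\rho) \leq \sqrt{\gamma}\,\dist(x_t, \calX_\rho)$.

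Next, $T_t$ is a composition of two Euclidean projections onto closed convex sets, so it is $1$-Lipschitz. Therefore $\norm{x_{t+1} - T_t(x_t)} \leq \norm{y_t - x_t} = \eta\norm{\nabla f_t(x_t)}$.

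Finally, $\dist(\cdot, \calX_\rho)$ is $1$-Lipschitz, so $\dist(x_{t+1}, \calX_\rho) \leq \dist(T_t(x_t), \calX_\rho) + \eta\norm{\nabla f_t(x_t)}$. Combined with the contraction bound, this yields~\eqref{eq:L4}.

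\textbf{Case $s_t = 0$.} As in the proof of Lemma~\ref{lem:enhanced-projection}, $x_t$ is a global minimizer of $g$, so $x_t \in \calX_\rho$. Since $x_t \in R\calB$, we have $x_t = \Pi_{R\calB}(x_t)$. Using that $x_t \in \calX_\rho$ and that $\Pi_{R\calB}$ is nonexpansive,
$\dist(x_{t+1}, \calX_\rho) \leq \norm{\Pi_{R\calB}(y_t) - \Pi_{R\calB}(x_t)} \leq \eta\norm{\nabla f_t(x_t)}$.
This is~\eqref{eq:L4} with $\dist(x_t, \calX_\rho) = 0$.

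The main subtlety I expect is that the Polyak step at round $t$ linearizes $g$ at $x_t$ but is applied to $y_t$. Lemma~\ref{lem:contraction} therefore cannot be applied to $y_t$ directly: its step would use $g(y_t)$ and a subgradient at $y_t$. The resolution is to view the fixed half-space $H_t$ as defining a single nonexpansive map, so the mismatch costs only the displacement $\norm{y_t - x_t}$. This requires checking that $T_t(x_t)$ coincides with the step analyzed in Lemma~\ref{lem:contraction}, which holds because the linearization at $x_t$ is exact at $x_t$.
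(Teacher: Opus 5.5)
Your proposal is correct and follows essentially the same route as the paper. The paper also introduces the virtual iterate $z_{t+1} = \Pi_{R\calB}(\Pi_{H_t}(x_t))$, applies Lemma~\ref{lem:contraction} to it, and bounds $\norm{x_{t+1} - z_{t+1}} \leq \eta\norm{\nabla f_t(x_t)}$ by nonexpansiveness of $\Pi_{R\calB}\circ\Pi_{H_t}$. You differ only in treating $s_t = 0$ as a separate case and in checking explicitly that $x_t \in R\calB$ (needed for Lemma~\ref{lem:contraction}); the paper covers $s_t = 0$ through the same unified representation and uses $x_t \in R\calB$ implicitly.
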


\begin{proof}
Define the virtual iterate $z_{t+1} := \Pi_{R\calB}(x_t - [g_t+\rho]_+ s_t/\norm{s_t}^2)$ when $s_t \neq 0$, and $z_{t+1} := x_t$ when $s_t = 0$.
By the triangle inequality:
$\dist(x_{t+1},\calX_\rho) \leq \norm{x_{t+1} - z_{t+1}} + \dist(z_{t+1},\calX_\rho)$.

By Lemma~\ref{lem:contraction} (when $s_t \neq 0$) or by $z_{t+1} = x_t \in \calX_\rho$ (when $s_t = 0$), we get $\dist(z_{t+1},\calX_\rho) \leq \sqrt{\gamma}\,\dist(x_t,\calX_\rho)$.

In both cases ($s_t = 0$ and $s_t \neq 0$), one can write $x_{t+1} = \Pi_{R\calB}(\Pi_{H_t}(y_t))$ and $z_{t+1} = \Pi_{R\calB}(\Pi_{H_t}(x_t))$.
By non-expansiveness of $\Pi_{R\calB} \circ \Pi_{H_t}$:
$\norm{x_{t+1} - z_{t+1}} \leq \norm{y_t - x_t} = \eta\norm{\nabla f_t(x_t)}$.

Combining yields~\eqref{eq:L4}.
\end{proof}

\subsection{Lemma~\ref{lem:recursion-expansion}: Recursion Expansion}

\begin{lemma}[Recursion expansion]
\label{lem:recursion-expansion}
Under the conditions of Lemma~\ref{lem:feasibility-recursion}, for all $t \geq 1$:
\begin{equation}
\label{eq:L5}
\dist(x_{t+1}, \calX_\rho) \leq \gamma^{t/2}\,\dist(x_1, \calX_\rho) + \frac{\eta G_f}{\xi}\,,
\end{equation}
where $\xi = 1 - \sqrt{\gamma} > 0$.
\end{lemma}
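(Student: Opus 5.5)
The plan is to unroll the one-step recursion of Lemma~\ref{lem:feasibility-recursion} by induction on $t$ and then bound the accumulated gradient terms by a geometric series. Write $D_t := \dist(x_t, \calX_\rho)$ and $a_t := \eta\norm{\nabla f_t(x_t)}$. Lemma~\ref{lem:feasibility-recursion} gives $D_{t+1} \leq \sqrt{\gamma}\, D_t + a_t$ for every $t \geq 1$. I will prove by induction the intermediate claim
\begin{equation*}
D_{t+1} \;\leq\; \gamma^{t/2} D_1 + \sum_{k=1}^{t} \gamma^{(t-k)/2} a_k .
\end{equation*}
The base case $t=1$ is exactly the recursion. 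For the inductive step, substitute the hypothesis for $D_t$ into $D_{t+1} \leq \sqrt{\gamma} D_t + a_t$. This multiplies every existing term by $\sqrt{\gamma}$, which shifts the exponents by $1/2$, and appends the new term $a_t$ with exponent $0$. Because $\sqrt{\gamma} \geq 0$, all inequalities are preserved.

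Next I bound each $a_k$ by Assumption~\ref{asm:gradf}. The point $x_k$ lies in $R\calB$: for $k \geq 2$ it is the output of $\Pi_{R\calB}$, and for $k=1$ it is in $R\calB$ by the input requirement. Hence $\norm{\nabla f_k(x_k)} \leq G_f$, so $a_k \leq \eta G_f$. The remaining sum is then at most
\begin{equation*}
\eta G_f \sum_{j=0}^{t-1} \gamma^{j/2} \;\leq\; \eta G_f \sum_{j=0}^{\infty} (\sqrt{\gamma})^{j} \;=\; \frac{\eta G_f}{1-\sqrt{\gamma}} \;=\; \frac{\eta G_f}{\xi}.
\end{equation*}
This step needs $\sqrt{\gamma} < 1$, which holds because $\gamma = 1-\sigma^2/G_g^2 \in [0,1)$. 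That in turn requires $\sigma \leq G_g$, which follows from Assumptions~\ref{asm:gradg}--\ref{asm:sublower} as long as some point of $\calX'$ lies in $R\calB$. Combining the two bounds yields~\eqref{eq:L5}.

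The argument is routine, and the only genuine points of care are two. First, the recursion must be applicable at every round, including rounds where $s_t = 0$. Lemma~\ref{lem:feasibility-recursion} already covers that case. Second, I must confirm that $\gamma<1$, so that $\xi>0$ and the geometric series converges. I would state that justification explicitly rather than treat the convergence as automatic.
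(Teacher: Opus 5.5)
Your proof is correct and matches the paper's argument: unroll the one-step recursion of Lemma~\ref{lem:feasibility-recursion} by induction, bound each gradient term by $\eta G_f$ via Assumption~\ref{asm:gradf}, and dominate the finite geometric sum by $1/(1-\sqrt{\gamma}) = 1/\xi$. The paper substitutes $\eta G_f$ before unrolling rather than after, which is immaterial. Your hedge about $\sigma \le G_g$ is also unnecessary: $\calX'$ is nonempty and $\calX' \subseteq \calX \subseteq R\calB$ by Assumption~\ref{asm:bounded}, so $\gamma \in [0,1)$ holds automatically.
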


\begin{proof}
Let $d_t = \dist(x_t, \calX_\rho)$.
By Lemma~\ref{lem:feasibility-recursion} and Assumption~\ref{asm:gradf}:
$d_{t+1} \leq \sqrt{\gamma}\,d_t + \eta G_f$.
Unrolling this linear recursion by induction:
$d_{t+1} \leq (\sqrt{\gamma})^t d_1 + \eta G_f \sum_{k=0}^{t-1}(\sqrt{\gamma})^k \leq \gamma^{t/2} d_1 + \eta G_f / \xi$,
where the geometric series is bounded by $1/(1-\sqrt{\gamma}) = 1/\xi$.
\end{proof}

\subsection{Proof of Theorem~\ref{thm:main}}
\label{app:proof-thm-main}

\begin{proof}[Proof of Theorem~\ref{thm:main}]

Decompose the regret as
\begin{equation}
\label{eq:decompose}
\Reg_T = \underbrace{\sum_{t=1}^T (f_t(x_t) - f_t(x^\star_\rho))}_{\text{Term I}} + \underbrace{\sum_{t=1}^T (f_t(x^\star_\rho) - f_t(x^\star))}_{\text{Term II}}\,,
\end{equation}
where $x^\star_\rho \in \arg\min_{x \in \calX_\rho} \sum_t f_t(x)$.

By convexity of $f_t$: $f_t(x_t) - f_t(x^\star_\rho) \leq \nabla f_t(x_t)^\top(x_t - x^\star_\rho)$.
By Lemma~\ref{lem:enhanced-projection} with $x = x^\star_\rho \in \calX_\rho$:
$\norm{x_{t+1} - x^\star_\rho}^2 \leq \norm{y_t - x^\star_\rho}^2 - \delta_t$.
Expanding $\norm{y_t - x^\star_\rho}^2$ with $y_t = x_t - \eta\nabla f_t(x_t)$:
\begin{align*}
\norm{y_t - x^\star_\rho}^2 &= \norm{x_t - x^\star_\rho}^2 - 2\eta\,\nabla f_t(x_t)^\top(x_t - x^\star_\rho) + \eta^2\norm{\nabla f_t(x_t)}^2.
\end{align*}
Combining and rearranging:
\[
f_t(x_t) - f_t(x^\star_\rho) \leq \frac{1}{2\eta}\bigl(\norm{x_t - x^\star_\rho}^2 - \norm{x_{t+1} - x^\star_\rho}^2\bigr) + \frac{\eta}{2}\norm{\nabla f_t(x_t)}^2 - \frac{\delta_t}{2\eta}\,.
\]
Summing over $t = 1,\ldots,T$ (telescoping) and using $\norm{x_1 - x^\star_\rho} \leq 2R$, $\norm{x_{T+1} - x^\star_\rho}^2 \geq 0$:
\begin{equation}
\label{eq:term1}
\text{Term I} \leq \frac{2R^2}{\eta} + \frac{\eta}{2}\calG_T - \frac{1}{2\eta}\calP_T\,.
\end{equation}

The prior proof applies the weak (rather than strong) Pythagorean inequality $\norm{x_{t+1}-x^\star_\rho}^2 \leq \norm{y_t-x^\star_\rho}^2$ (which does not retain $\delta_t$) and bounds $\norm{\nabla f_t(x_t)}^2 \leq G_f^2$ (which collapses the data-dependent sum to $G_f^2T$), yielding Term~I~$\leq \frac{2R^2}{\eta} + \frac{\eta}{2}G_f^2 T$.
Our proof retains both quantities.

By definition of $x^\star_\rho$: $\sum_t f_t(x^\star_\rho) \leq \sum_t f_t(\Pi_{\calX_\rho}(x^\star))$.
By $G_f$-Lipschitz continuity of each $f_t$:
$f_t(\Pi_{\calX_\rho}(x^\star)) - f_t(x^\star) \leq G_f\,\dist(x^\star, \calX_\rho)$.
By Lemma~\ref{lem:error-bound} with $g(x^\star) \leq 0$:
$\dist(x^\star, \calX_\rho) \leq \rho/\sigma$.
Hence $\text{Term II} \leq G_f\rho T / \sigma$.
Combining with~\eqref{eq:term1} yields~\eqref{eq:thm-regret}.

By Lemma~\ref{lem:recursion-expansion} (replacing $t$ by $t-1$), for all $t \geq 1$:
$\dist(x_t, \calX_\rho) \leq \gamma^{(t-1)/2}\dist(x_1,\calX_\rho) + \eta G_f / \xi$.
Since $g$ is $G_g$-Lipschitz and $g(\Pi_{\calX_\rho}(x_t)) \leq -\rho$:
$g(x_t) \leq G_g\,\dist(x_t,\calX_\rho) - \rho$,
which gives~\eqref{eq:thm-feas}.
\end{proof}

\subsection{Proofs of Corollaries}
\label{app:proof-corollaries}

\begin{proof}[Proof of Corollary~\ref{cor:known}]
\emph{Feasibility:} $g(x_1) \leq -\alpha \leq -\rho$ gives $\dist(x_1,\calX_\rho) = 0$.
Substituting into~\eqref{eq:thm-feas}: $g(x_t) \leq \eta G_g G_f / \xi - \rho$.
With $\eta = \xi\rho/(G_f G_g)$: $\eta G_g G_f / \xi = \rho$, so $g(x_t) \leq 0$.

\emph{Regret:}
$\frac{2R^2}{\eta} = \frac{2R^2 G_f G_g \sqrt{T}}{\xi\alpha}$, \;
$\frac{\eta}{2}\calG_T = \frac{\xi\alpha}{2G_f G_g\sqrt{T}}\calG_T$, \;
$\frac{1}{2\eta}\calP_T = \frac{G_f G_g\sqrt{T}}{2\xi\alpha}\calP_T$, \;
$\frac{G_f\rho}{\sigma}T = \frac{G_f\alpha}{\sigma}\sqrt{T}$.
Combining yields~\eqref{eq:cor-known}.
\end{proof}

\begin{proof}[Proof of Corollary~\ref{cor:delayed}]
The feasibility analysis is identical to that of prior work (since~\eqref{eq:thm-feas} is unchanged).
The regret part follows by substituting $\eta = \xi\epsilon/(2G_f G_g\sqrt{T})$, $\rho = \epsilon/\sqrt{T}$ into~\eqref{eq:thm-regret} and using $\calG_T \leq G_f^2 T$.
\end{proof}

\begin{proof}[Proof of Corollary~\ref{cor:noshrink}]
With $\rho = 0$ and $\eta = 2R/(G_f\sqrt{T})$:
$\frac{2R^2}{\eta} = RG_f\sqrt{T}$, \;
$\frac{\eta}{2}\calG_T = \frac{R\,\calG_T}{G_f\sqrt{T}}$, \;
$\frac{1}{2\eta}\calP_T = \frac{G_f\sqrt{T}}{4R}\calP_T$.
The feasibility bound follows from~\eqref{eq:thm-feas} with $\rho = 0$.
\end{proof}

\begin{proof}[Proof of Corollary~\ref{cor:improvement}]
Direct subtraction: $\bigl(\frac{\eta}{2}G_f^2 T\bigr) - \bigl(\frac{\eta}{2}\calG_T - \frac{1}{2\eta}\calP_T\bigr) = \frac{\eta}{2}(G_f^2 T - \calG_T) + \frac{1}{2\eta}\calP_T$.
Both terms are non-negative since $\calG_T \leq G_f^2 T$ and $\calP_T \geq 0$.
\end{proof}

\begin{proof}[Proof of Proposition~\ref{prop:lower}]
When $t \notin \calA$, $\delta_t = 0$.
When $t \in \calA$, $\delta_t = (g_t + s_t^\top(y_t - x_t) + \rho)^2 / \norm{s_t}^2 \geq (g_t + s_t^\top(y_t - x_t) + \rho)^2 / G_g^2$ by Assumption~\ref{asm:gradg}.
\end{proof}

\begin{proof}[Proof of Proposition~\ref{prop:extreme}]
(a) follows from $\delta_t = 0$ for all $t$.
(b) follows from Proposition~\ref{prop:lower} with $|\calA| = T$.
\end{proof}

\subsection{Proof of Theorem~\ref{thm:adaptive} (AdaOGD-PFS)}
\label{app:proof-thm-adaptive}

\begin{proof}[Proof of Theorem~\ref{thm:adaptive}]
Lemmas~\ref{lem:enhanced-projection}--\ref{lem:contraction} hold unchanged, since they do not depend on the step size $\eta$.
The proof adapts the telescoping argument of Theorem~\ref{thm:main} to the time-varying step size $\eta_t = c/\sqrt{S_t}$.

Decompose $\Reg_T$ as in~\eqref{eq:decompose}.

By convexity and Lemma~\ref{lem:enhanced-projection} (which holds for any $\eta_t$):
\[
f_t(x_t) - f_t(x^\star_\rho) \leq \frac{1}{2\eta_t}\bigl(\norm{x_t - x^\star_\rho}^2 - \norm{x_{t+1} - x^\star_\rho}^2\bigr) + \frac{\eta_t}{2}\norm{\nabla f_t(x_t)}^2 - \frac{\delta_t}{2\eta_t}\,.
\]
Summing over $t = 1, \ldots, T$:
\begin{equation}
\label{eq:ada-term1-raw}
\text{Term I} \leq \underbrace{\sum_{t=1}^T \frac{1}{2\eta_t}\bigl(\norm{x_t - x^\star_\rho}^2 - \norm{x_{t+1} - x^\star_\rho}^2\bigr)}_{(A)} + \underbrace{\sum_{t=1}^T \frac{\eta_t}{2}\norm{\nabla f_t(x_t)}^2}_{(B)} - \sum_{t=1}^T \frac{\delta_t}{2\eta_t}\,.
\end{equation}

Let $a_t := 1/(2\eta_t) = \sqrt{S_t}/(2c)$, which is non-decreasing since $S_t$ is non-decreasing.
Let $D_t := \norm{x_t - x^\star_\rho}^2 \leq 4R^2$.
By Abel's summation formula:
\[
(A) = a_1 D_1 - a_T D_{T+1} + \sum_{t=2}^T D_t(a_t - a_{t-1}) \leq 4R^2 \Bigl(a_1 + \sum_{t=2}^T (a_t - a_{t-1})\Bigr) = 4R^2\, a_T = \frac{2R^2}{c}\sqrt{S_{T+1}}\,.
\]
Since $S_{T+1} = \epsilon_0 + \calG_T$, we get $(A) \leq \frac{2R^2}{c}\sqrt{\calG_T + \epsilon_0}$.

For non-negative $b_t := \norm{\nabla f_t(x_t)}^2$ and $S_t = \epsilon_0 + \sum_{i<t} b_i$:
\[
\sum_{t=1}^T \frac{b_t}{\sqrt{S_t}} \leq 2\bigl(\sqrt{S_{T+1}} - \sqrt{S_1}\bigr) = 2\bigl(\sqrt{\calG_T + \epsilon_0} - \sqrt{\epsilon_0}\bigr)\,.
\]
Therefore $(B) = \frac{c}{2}\sum_t b_t/\sqrt{S_t} \leq c\bigl(\sqrt{\calG_T + \epsilon_0} - \sqrt{\epsilon_0}\bigr)$.

\rev{We note in passing that the inequality $\sum_{t=1}^T b_t/\sqrt{S_t} \leq 2(\sqrt{S_{T+1}}-\sqrt{S_1})$ used above is the standard telescoping bound that holds when the denominator includes the current $b_t$, i.e., $S_{t+1}$ instead of $S_t$ \citep[Lemma~4]{duchi2011adaptive}. With the past-gradient denominator $S_t$ used by Algorithm~\ref{alg:ada-ogd-pfs}, the same telescoping yields the slightly weaker bound
\[
\sum_{t=1}^T \frac{b_t}{\sqrt{S_t}} \;=\; \sum_{t=1}^T \frac{b_t}{\sqrt{S_{t+1}}}\;+\;\sum_{t=1}^T b_t\!\left(\frac{1}{\sqrt{S_t}} - \frac{1}{\sqrt{S_{t+1}}}\right) \;\leq\; 2\bigl(\sqrt{S_{T+1}}-\sqrt{S_1}\bigr) \;+\; \frac{G_f^2}{\sqrt{\epsilon_0}}\,,
\]
because $b_t \leq G_f^2$ and $\sum_t \bigl(1/\sqrt{S_t} - 1/\sqrt{S_{t+1}}\bigr) = 1/\sqrt{S_1} - 1/\sqrt{S_{T+1}} \leq 1/\sqrt{\epsilon_0}$ telescopes. Substituting back gives $(B) \leq c\bigl(\sqrt{\calG_T+\epsilon_0}-\sqrt{\epsilon_0}\bigr) + cG_f^2/(2\sqrt{\epsilon_0})$, which produces the $cG_f^2/(2\sqrt{\epsilon_0})$ overhead now made explicit in~\eqref{eq:ada-regret}.}

\[
\text{Term I} \leq \frac{2R^2}{c}\sqrt{\calG_T + \epsilon_0} + c\sqrt{\calG_T + \epsilon_0} - c\sqrt{\epsilon_0} \rev{\,+\, \frac{cG_f^2}{2\sqrt{\epsilon_0}}} - \sum_{t=1}^T \frac{\delta_t}{2\eta_t} \leq \left(\frac{2R^2}{c} + c\right)\sqrt{\calG_T + \epsilon_0} \rev{\,+\, \frac{cG_f^2}{2\sqrt{\epsilon_0}}} - \sum_{t=1}^T \frac{\delta_t}{2\eta_t}\,.
\]

 Identical to the proof of Theorem~\ref{thm:main}: $\text{Term II} \leq G_f\rho T/\sigma$.

Combining yields~\eqref{eq:ada-regret}.

\medskip

Since $\eta_t$ is non-increasing, $\eta_1 = c/\sqrt{\epsilon_0} = \max_t \eta_t$.
The feasibility recursion (Lemma~\ref{lem:feasibility-recursion}) gives
$\dist(x_{t+1}, \calX_\rho) \leq \sqrt{\gamma}\,\dist(x_t, \calX_\rho) + \eta_t \norm{\nabla f_t(x_t)} \leq \sqrt{\gamma}\,\dist(x_t, \calX_\rho) + \eta_1 G_f$,
which, by the same expansion as Lemma~\ref{lem:recursion-expansion}, yields
$\dist(x_t, \calX_\rho) \leq \gamma^{(t-1)/2}\dist(x_1, \calX_\rho) + \eta_1 G_f / \xi$.
Applying the Lipschitz bound $g(x_t) \leq G_g\,\dist(x_t, \calX_\rho) - \rho$ gives~\eqref{eq:ada-feas}.
\end{proof}

\begin{proof}[Proof of Corollary~\ref{cor:ada-known}]
\emph{Feasibility:} $g(x_1) \leq -\alpha \leq -\rho$ gives $\dist(x_1, \calX_\rho) = 0$.
By~\eqref{eq:ada-feas}: $g(x_t) \leq c\,G_g G_f/(\xi\sqrt{\epsilon_0}) - \rho$.
With $c = R\sqrt{2}$ and $\epsilon_0 = 2G_g^2 G_f^2 R^2/(\xi^2\rho^2)$:
$c\,G_g G_f / (\xi\sqrt{\epsilon_0}) = R\sqrt{2} \cdot G_g G_f / (\xi \cdot G_g G_f R\sqrt{2}/(\xi\rho)) = \rho$.
Hence $g(x_t) \leq 0$.

Substitute $c = R\sqrt{2}$ into~\eqref{eq:ada-regret}: $\frac{2R^2}{R\sqrt{2}} + R\sqrt{2} = 2R\sqrt{2}$. \rev{The past-gradient overhead $cG_f^2/(2\sqrt{\epsilon_0})$ evaluates, under the chosen $\epsilon_0=2G_g^2G_f^2R^2/(\xi^2\rho^2)$ and $c=R\sqrt 2$, to $\xi\rho G_f/(2G_g)$; substituting $\rho=\alpha/\sqrt T$ gives $\xi\alpha G_f/(2G_g\sqrt T)$, which is absorbed into the leading $\sqrt T$ term in~\eqref{eq:ada-cor} and contributes a vanishing additive constant relative to the dominant $2R\sqrt{2\calG_T+2\epsilon_0}$ piece.}
\end{proof}

\section{Additional Experiments}
\label{app:additional-experiments}

\subsection{AdaOGD-PFS vs Fixed-Step-Size OGD-PFS}

Table~\ref{tab:exp-main} shows that AdaOGD-PFS achieves bounds comparable to Theorem~\ref{thm:main} while using an adaptive step size that does not require knowledge of $G_f$.
On the halfspace problem, AdaOGD-PFS achieves \emph{lower actual regret} (148 vs.\ 173 for OGD-PFS; both measured on the same problem instance) than fixed-step-size OGD-PFS, demonstrating the benefit of adapting $\eta_t$ to the observed gradient magnitudes.
On the ball problems, fixed-step-size OGD-PFS has lower regret (235 vs.\ 373 for AdaOGD-PFS), which is expected since the fixed step size $\eta = 2R/(G_f\sqrt{T})$ is tuned with knowledge of $G_f$ while AdaOGD-PFS must learn it online.
The key advantage of AdaOGD-PFS is that it does not require $G_f$ as input and achieves a bound that scales with $\sqrt{\calG_T}$ rather than $G_f\sqrt{T}$, a significant practical benefit when $G_f$ is unknown or overly conservative.
Figure~\ref{fig:adaptive} provides a visual three-way comparison across all problem instances.

\begin{figure}[ht]
\centering
\includegraphics[width=0.55\textwidth]{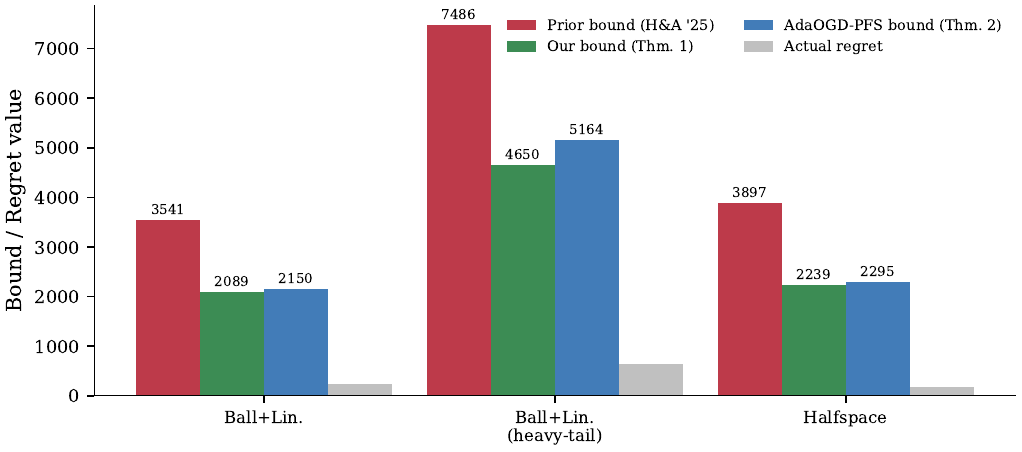}
\caption{Three-way comparison of bounds across problem instances.
Red: prior bound; green: our refined bound (Theorem~\ref{thm:main}); blue: AdaOGD-PFS bound (Theorem~\ref{thm:adaptive}); gray: actual regret. \rev{The actual regret shown in grey is that of OGD-PFS (Algorithm~\ref{alg:ogd-pfs}); the actual regret of AdaOGD-PFS (Algorithm~\ref{alg:ada-ogd-pfs}) at $T=10{,}000$ is $373$ (Ball+Linear), $1{,}173$ (Ball+Linear HT), and $148$ (Halfspace), as also noted in the caption of Table~\ref{tab:exp-main}.}}
\label{fig:adaptive}
\end{figure}

\subsection{Bound Component Decomposition}

We further visualize the internal structure of the bound improvement from three complementary perspectives.
Figure~\ref{fig:decomposition} shows the percentage contribution of gradient refinement versus Polyak correction for each problem.
Figure~\ref{fig:cumulative} traces the cumulative growth of both $\calG_t$ and $\calP_t$ over time, revealing that the gap between $\calG_t$ and $G_f^2 t$ widens steadily while $\calP_t$ accumulates at a near-constant rate once the constraint becomes active.
Figure~\ref{fig:waterfall} decomposes each bound into its additive components, making the source of tightening visually explicit.

\begin{figure}[ht]
\centering
\includegraphics[width=0.45\textwidth]{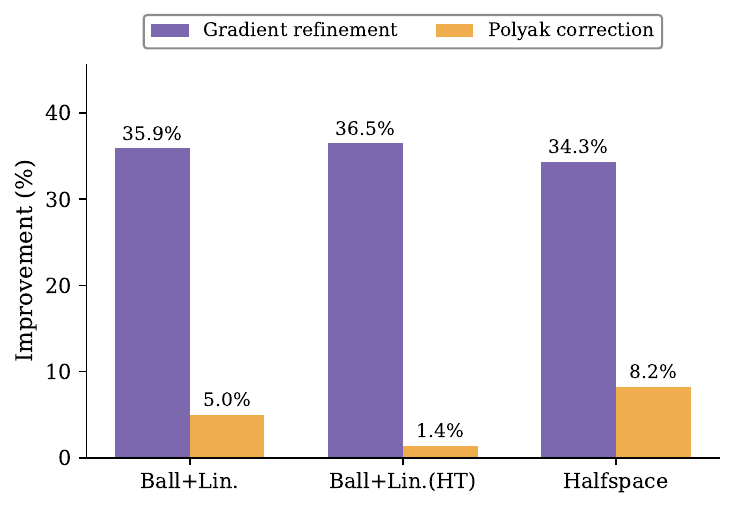}
\caption{Decomposition of the bound improvement into gradient refinement (purple) and Polyak correction (orange) for each problem instance.}
\label{fig:decomposition}
\end{figure}

\begin{figure}[ht]
\centering
\includegraphics[width=0.75\textwidth]{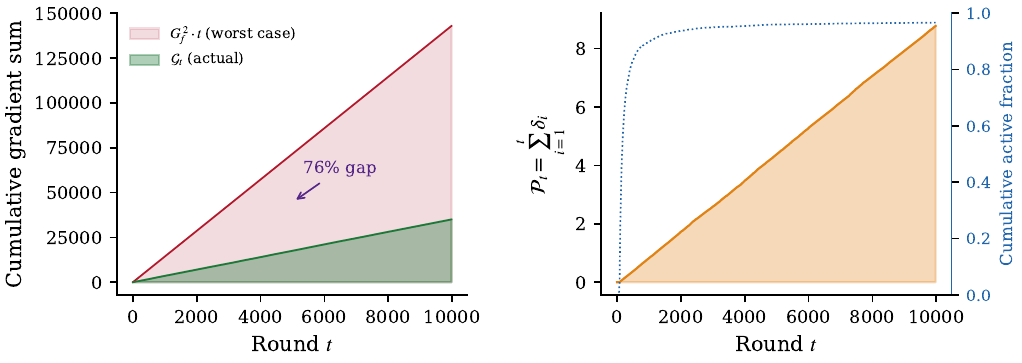}
\caption{Left: cumulative gradient accumulation $\calG_t$ (green) vs.\ worst-case $G_f^2 \cdot t$ (red), showing an $\approx$72\% gap. \rev{The gap percentage equals $1 - \calG_T/(G_f^2T)$, consistent with the ratio $\calG_T/(G_f^2T)\approx 0.28$ for the Ball+Linear instance in Table~\ref{tab:exp-main}; the small numerical fluctuations across panels and seeds are within the standard deviations of Table~\ref{tab:exp-decomp}.}
Right: cumulative Polyak correction $\calP_t$ growing steadily over time, with the active constraint fraction (dashed) converging to $\approx 0.97$.}
\label{fig:cumulative}
\end{figure}

\begin{figure}[ht]
\centering
\includegraphics[width=0.45\textwidth]{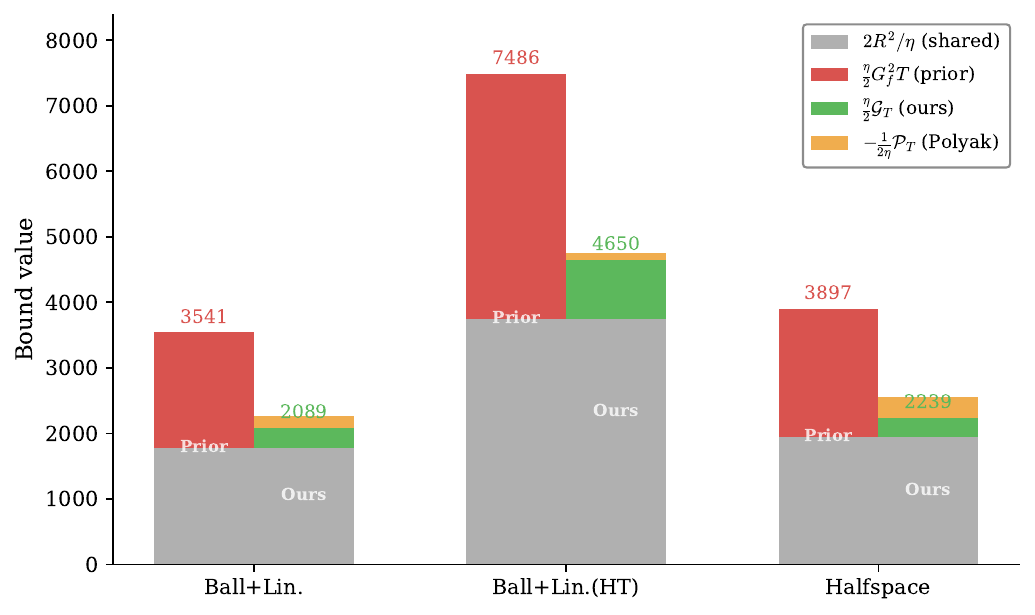}
\caption{Bound component decomposition.
Each problem shows two bars: left (prior bound) and right (our bound).
The shared initial-distance term $2R^2/\eta$ (gray) is identical; the gradient term shrinks from $\frac{\eta}{2}G_f^2 T$ (red) to $\frac{\eta}{2}\calG_T$ (green); the Polyak correction $-\frac{1}{2\eta}\calP_T$ (orange) provides additional reduction.}
\label{fig:waterfall}
\end{figure}

\subsection{Sensitivity Analysis}

We examine how the bound improvement varies with the step size $\eta$ and the shrinkage parameter $\rho$.
Figure~\ref{fig:heatmap} shows that larger $\eta$ increases both sources of improvement (gradient refinement and Polyak correction), since a larger step size amplifies the gap between observed and worst-case gradient norms and also causes more frequent constraint violations that the Polyak step must correct.
Increasing $\rho$ primarily boosts the Polyak correction component, as the tighter constraint forces the feasibility projection to be active more often.

\begin{figure}[!ht]
\centering
\includegraphics[width=0.75\textwidth]{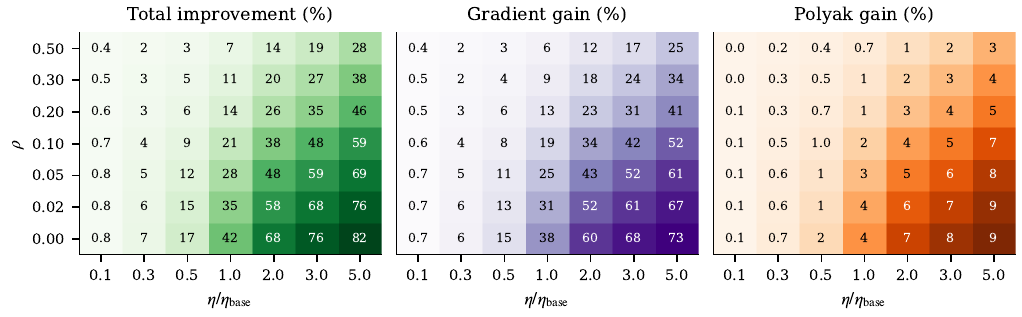}
\caption{Sensitivity of bound improvement to step size $\eta$ and shrinkage $\rho$.
Left: total improvement (\%); center: gradient refinement gain; right: Polyak correction gain.}
\label{fig:heatmap}
\end{figure}

\subsection{Distributional Analysis}

Finally, we inspect the per-round distributions underlying the aggregate quantities $\calG_T$ and $\calP_T$.
Figure~\ref{fig:violin} confirms that the gradient norms $\norm{\nabla f_t(x_t)}$ are concentrated well below the worst-case bound $G_f$, with the bulk of the mass near the mean rather than the tail.
This concentration explains the large gradient refinement gain (34--37\%).
The right panel shows that the nonzero Polyak corrections $\delta_t$ span several orders of magnitude, with occasional large corrections corresponding to rounds where the gradient step pushes the iterate far outside the linearized constraint.

\begin{figure}[!t]
\centering
\includegraphics[width=0.75\textwidth]{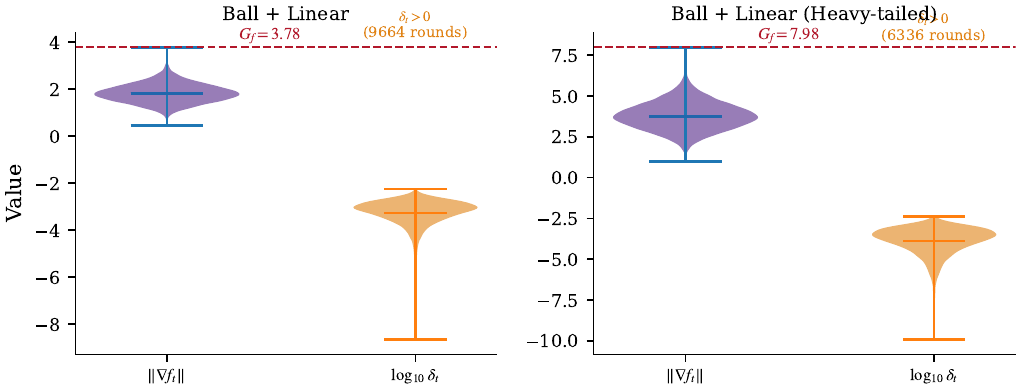}
\caption{Distribution of per-round gradient norms $\norm{\nabla f_t(x_t)}$ (left violin) compared to the worst-case bound $G_f$ (dashed red line), and log-scale distribution of nonzero Polyak corrections $\delta_t$ (right violin).}
\label{fig:violin}
\end{figure}

\end{document}